\definecolor{linkcolor}{RGB}{74, 102, 146}
\definecolor{mygray}{gray}{0.95}
\renewcommand{\div}{\text{div}}
\newcommand{\R}{\mathbb{R}}
\newcommand{\C}{\mathbb{C}}
\newcommand{\ep}{\varepsilon}
\newcommand{\ang}[1]{\left\langle #1 \right\rangle}
\newcommand{\argmin}{\text{argmin}}
\newtheorem{prop}{Proposition}
\definecolor{antiquefuchsia}{rgb}{0.57, 0.36, 0.51}
\definecolor{atomictangerine}{rgb}{1.0, 0.6, 0.4}
\renewcommand{\to}{\longrightarrow}
\newtheorem{proposition}{Proposition}
\author{
  \textbf{Jack Richter-Powell} \\
  Vector Institute \& MIT \\
  \texttt{jricpow@mit.edu}
  \And
  \textbf{Luca Thiede}\\
  Vector Institute \& UofT\\
  \texttt{luca.thiede@cs.toronto.edu}
  \And
  \textbf{Al\'an Aspuru-Guzik}\\
   Vector Institute \& UofT\\
  \texttt{aspuru@cs.toronto.edu} 
  \And
  \textbf{David Duvenaud}\\
  Vector Institute \& UofT\\
  \texttt{duvenaud@cs.toronto.edu}
}
\title{Sorting Out Quantum Monte Carlo}
\begin{document}

\maketitle

\begin{abstract}
Molecular modeling at the quantum level requires choosing a parameterization of the wavefunction that both respects the required particle symmetries, and is scalable to systems of many particles.
For the simulation of fermions, valid parameterizations must be antisymmetric with respect to the exchange of particles. 
Typically, antisymmetry is enforced by leveraging the anti-symmetry of determinants with respect to the exchange of matrix rows, but this involves computing a full determinant each time the wavefunction is evaluated.
Instead, we introduce a new antisymmetrization layer derived from sorting, the \emph{sortlet}, which scales as $O(N \log N)$ with regards to the number of particles -- in contrast to $O(N^3)$ for the determinant.
We show numerically that applying this anti-symmeterization layer on top of an attention based neural-network backbone yields a flexible wavefunction parameterization capable of reaching chemical accuracy when approximating the ground state of first-row atoms and small molecules.

\end{abstract}

\section{Introduction}
\subsection{Overview}
Quantum Monte Carlo (QMC) methods are a class of algorithm that aim to model the wavefunction for a system of quantum particles, typically an atom or molecule. In its simplest form, Variational Quantum Monte Carlo (VQMC) is a QMC method that aims to estimate the lowest-energy state of a system via the variational principle. This is done by minimizing the Rayleigh quotient of the system's Hamiltonian over a parametric family of wavefunctions -- commonly referred to as the \textit{wavefunction ansatz}. Variational principles from functional analysis yield that under mild assumptions, the minimizer approaches the true ground state wavefunction as the parametric family increases in expressiveness. 

The first method resembling modern VQMC was proposed by London and Heilter in the late 1920s \cite{saulhl_1936}, where they attempted to calculate the ground state of the diatomic hydrogen molecule. Fermi and then later Kalos~\cite{kalos62}
converted the problem into a Monte Carlo Sampling one. The advent of the practical Slater determinant ansatz in the 1950s and the growth in available computational power since has allowed QMC algorithms to become one of the benchmark frameworks for deriving the properties of chemical systems \emph{in silico}. Often, QMC is used to benchmark other non-sampling methods, such as Coupled Cluster methods \cite{al-hamdani21} and Density Functional Theory (DFT).

Advances in automatic differentiation complemented by empirical ML experience have recently produced new types of wavefunction ansatz, assembled around deep neural network backbones. Neural networks are particularly attractive in this setting, due to their favourable scalability as dimension increases. Electron configuration space, $\R^{3N}$, grows exponentially in the number of electrons $N$, rendering this ability critical. Hybrid methods, seeking to combine Slater determinants with neural orbitals or Jastrow factors, have recently shown promise on systems comprised of a large number of particles -- even when relatively few Slater determinants are employed \citep{pfau_ab-initio_2020}. Their performance is especially notable when contrasted against existing Hartree-Fock implementations with an equivalent number of determinants. 

Despite these recent successes, there is a good reason to look beyond Slater determinants when designing ansatz -- determinants are relatively computationally expensive. Evaluating a determinant-based ansatz scales $O(N^3)$ in the number of electrons, devolving to $O(N^4)$ when evaluating the local energy (due to the Laplacian). Estimating the energy is required at every step of nearly every QMC algorithm (variational or otherwise), so this quickly becomes a bottleneck if we look to scale QMC beyond small molecules to problems of practical interest, where $N$ could be on the order of thousands or even tens of thousands of electrons. 


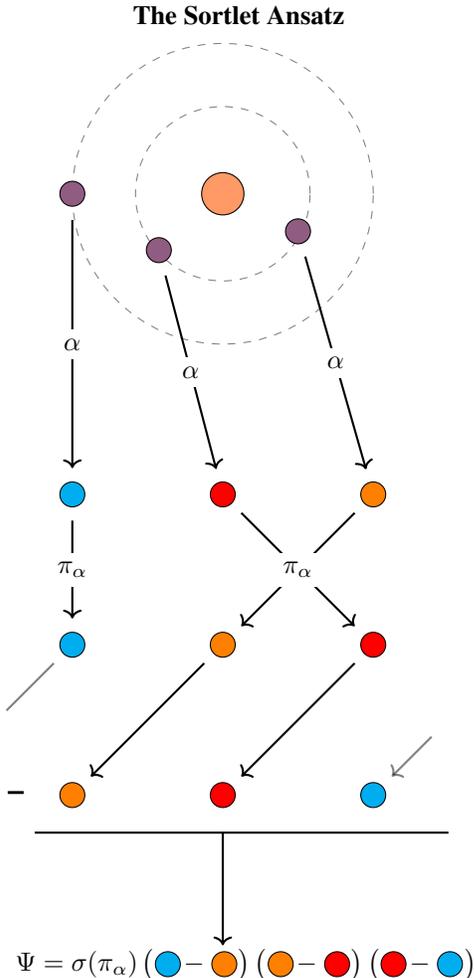
\begin{wrapfigure}{r}{0.5\textwidth}



\centering
\vspace{0pt}
\hspace{-4pt} \textbf{The Sortlet Ansatz}\par\medskip
\begin{tikzpicture}

    \begin{scope}[shift={(0,-1)}]
    \draw[fill=atomictangerine,] (2,9) circle (8pt);

    \draw[fill=none,stroke=black, opacity=0.5, dashed] (2,9) circle (1.16);
    \draw[fill=none,stroke=black, opacity=0.5, dashed] (2,9) circle (2);
    
    \node[draw, fill=antiquefuchsia, circle] (e1) at (0,9) {};

    \node[draw, fill=antiquefuchsia, circle] (e3) at (3,8.5) {};

    \node[draw, fill=antiquefuchsia, circle] (e2) at (1.15,8.25) {};
    \end{scope}

    \node[draw, fill=cyan, circle] (a1) at (0,4) {};
    \node[draw, fill=red, circle] (a2)  at (2,4) {};
    \node[draw, fill=orange, circle] (a3) at (4,4) {};
    
    \draw [->, thick, shorten >=5pt, shorten <=5pt] (e1.south) -- (a1.north)  node [midway, fill=white] {$\alpha$}; 
    \draw [->, thick, shorten >=5pt, shorten <=5pt] (e2) -- (a2) node [midway, fill=white] {$\alpha$};  
    \draw [->, thick, shorten >=5pt, shorten <=5pt] (e3) -- (a3) node [midway, fill=white] {$\alpha$}; 

    \node[draw, fill=cyan, circle] (b1) at (0,2) {};
    \node[draw, fill=orange, circle] (b2) at (2,2) {};
    \node[draw, fill=red, circle] (b3) at  (4,2) {};

    \draw [->, thick, shorten >=5pt, shorten <=5pt] (a1) -- (b1)  node [midway, fill=white] {$\pi_{\alpha}$}; 
    \draw [->, thick, shorten >=5pt, shorten <=5pt] (a2) -- (b3) node [midway, fill=white] {$\pi_{\alpha}$};  
    \draw [->, thick, shorten >=5pt, shorten <=5pt] (a3) -- (b2) node [midway, fill=white] {$\pi_{\alpha}$}; 
    
    \node[draw, fill=orange, circle] (c1) at (0, 0) {};
    \node[draw, fill=red, circle] (c2) at (2,0) {};
    \node[draw, fill=cyan, circle] (c3) at (4,0) {};

    \draw [->, thick, shorten >=5pt, shorten <=5pt] (b2) -- (c1)  node [midway] {};
    \draw [->, thick, shorten >=5pt, shorten <=5pt] (b3) -- (c2) node [midway] {};
    \draw [thick, shorten >=5pt, shorten <=5pt, opacity=0.5] (b1) -- (-1,1) {}; 
    \draw [->, thick, shorten >=5pt, shorten <=5pt, opacity=0.5] (4.9,0.9) -- (c3) node [near start, above=0.0cm] {};

    \draw [thick] (-0.5, -0.5) -- (5, -0.5);
    
    \node[] at (-0.75,0)  { \large \textbf{--}};

    \draw[->, thick] (2,-0.5) -- (2, -2);

    \node[fill=none] at (.12-0.06, -2.25) {$\Psi = \sigma(\pi_\alpha)$};
    \begin{scope}[shift={(1.12-0.6,0)}]
    
    \node at (1.12, -2.25) {$\big( \hspace{32 pt} \big)$};
    \node[draw, fill=cyan, circle] at (0.75, -2.25) {};
    \node at (1.1, -2.25) {$-$};
    \node[draw, fill=orange, circle] at (1.5, -2.25) {};

    \node at (2.62, -2.25) {$\big( \hspace{32 pt} \big)$};
    \node[draw, fill=orange, circle] at (2.25, -2.25) {};
    \node at (2.6, -2.25) {$-$};
    \node[draw, fill=red, circle] at (3, -2.25) {};

    \node at (4.13, -2.25) {$\big( \hspace{32 pt} \big)$};
    \node[draw, fill=red, circle] at (3.75, -2.25) {};
    \node at (4.1, -2.25) {$-$};
    \node[draw, fill=cyan, circle] at (4.5, -2.25) {};
    \end{scope}

\end{tikzpicture}
\caption{Geometric illustration of the Sortlet ansatz construction given in \Cref{eq:sortlet}. Here $\pi_\alpha$ is the permutation that sorts the output of $\alpha(r)$.}
\end{wrapfigure}

\subsection{Our Contribution}
Seeking to reduce this complexity, we introduce a novel antisymmetrization operation, the \textit{sortlet}, and apply it within the VQMC framework to model ground states of various atoms and molecules. Crucially, our operation enjoys an improved $O(N\log N)$ complexity, which comes from replacing the determinant with a cheaper alternative: sorting. Sorting is known to be universal for systems of 1d electrons \cite{thiede2023waveflow, hutter_representing_2020}, and has been alluded to previously as a method for designing ansatz (see Appendix B of \cite{pfau_ab-initio_2020} and \cite{luo_backflow_2019}). 
Our contribution is twofold -- we show both that a natural extension of sorting for electrons in 3d space exists, as well as that this ansatz can achieve chemical accuracy, at least on small systems, when paired with a sufficiently flexible functional backbone such as a modern attention based neural network.

Prior work \citep{pang_on2_2022,acevedo_vandermonde_2020} explored the usage of Vandermonde determinants which scale $O(N^2)$, but both were unable to achieve the high degree of accuracy required for applications in quantum chemistry. In \Cref{sec:comparison} we present a topological obstruction to a single continuous Vandermonde determinant learning the exact ground state, based on the known geometry of the wavefunction's nodal surface.

The construction of a universal, sub-cubic-time continuous representation of ground-state wavefunctions remains an open problem.
However, we show that, like the vandermonde determinant, our sortlet can represent the ground state exactly if we allow discontinuous parameterizations. We also highlight a key benefit of the lower scaling complexity of the sortlet -- allowing more terms in the wavefunction ansatz with the same (asymptotic) computational budget. Classical QMC techniques have relied on larger expansions to help mitigate topological inconsistencies between the parametric ansatz and the true ground state \cite{transcorrelated_qmc, bressanini_implications_2012}. In the context of VQMC, this might partially explain why our method is able to achieve higher accuracy than previous Vandermonde constructions \cite{pang_on2_2022, acevedo_vandermonde_2020}.

Numerically, we demonstrate that combining our sortlet antisymmetrization layer with the PsiFormer \cite{von_glehn_self-attention_2023} attention backbone is sufficient to achieve chemical accuracy on a handful of atoms and small molecules, as well as reproduce the potential energy surface of $H_4$ rectangles  as one bond length is scanned. While wavefunction ansatz built from alternatives to determinants have existed in the QMC literature for some time, to the best of our knowledge, this is the first work to demonstrate chemical accuracy with something other than a determinant  -- even if only on small molecular systems at the current stage. 
Flexibility of the attention neural network backbone of \cite{von_glehn_self-attention_2023} offers another partial explanation for why our ansatz has proven more accurate than other full determinant alternatives in the past. That said, at the current state our results are far from competitive with those of neural network ansatz with full determinants \cite{pfau_ab-initio_2020, deepqmc, von_glehn_self-attention_2023}. In fairness to our approach, the  results presented in that body of work are the outcome of a long series of incremental improvements -- far beyond the scope of the initial proof-of-concept described in this paper. Our accuracy on small systems, we believe, is a evidence that the sortlet ansatz is a promising direction for further study, and that with similar investments in software engineering, our method could become competitive on more difficult benchmarks. 

\subsection{Where did the determinants come from anyway?}
Determinants appear in nearly all QMC ansatz as a simple mathematical way to satisfy the generalized Pauli Exclusion Principle: any valid wavefunction must be antisymmetric under exchange (transposition) of any two electrons with the same spin. 
Intuitively, this follows from the idea that quantum particles such as electrons are indistinguishable, so changing their order (which has no physical meaning) cannot change the state of the system. Explaining the appearance of the -1 factor is more involved -- representing the fundamental difference between Fermions and Bosons (see \citep{griffiths_schroeter_2018}) -- but for our purposes, we just assume any valid wavefunction for Fermions needs to respect the antisymmetry equation:
\begin{equation}
  \label{eq:antisymmetry}
  \Psi(r^{\uparrow}_1, \cdots, r^{\uparrow}_M, r^{\downarrow}_1, \cdots, r^{\downarrow}_i, \cdots, r^\downarrow_j, \cdots) = - \Psi(r^{\uparrow}_1, \cdots, r^{\uparrow}_M, r^{\downarrow}_1, \cdots, r^{\downarrow}_j, \cdots, r^\downarrow_i, \cdots) \tag{AS}
\end{equation}
Alternation of the determinant under the exchange of any two rows or columns has been conventionally employed to satisfy this property. Originally, the Slater determinants typically consisted of parameterizing a matrix function $\Phi : \R^{3N} \longrightarrow \R^{N \times N}$ via a collection of $N$ single electron orbitals $\Phi_{ij} = \phi_i (r_j)$, each of which was only a function of a single $r_j$. Upon interchanging the electron positions, the resulting matrix $\Phi'_{ij}$ is exactly $\Phi_{ij}$ with two rows swapped, flipping the sign of $\Psi := \det [\Phi_{ij}]$.  

Though this approach is undeniably tidy, it suffers limited expressive power due to its inability to accurately model electronic correlation, which involves the interactions of all N-particles at a time rather than treating particles as mean-field orbitals.  Common remedies include either the addition of parametric Jastrow factors which are multiplicative against the determinant, i.e $\Psi = e^{J(r)}\det [\Phi_{ij}(r)]$, or  backflow transformations \cite{feynman_energy_1956}, which makes each orbital $\phi_i (q_j)$ dependent on all electron positions through so called pseudo-coordinates $q_j = r_j + \ep^i r_i$. Modern neural network approaches such as FermiNet, PsiFormer or PauliNet can be seen as a more flexible generalization of backflow, since they opt to parameterize the matrix $\Phi_{ij}$ as the output of a deep neural network, but in a way that the $\Phi_{ij}$ depend symmetrically on all electron positions except for $r_j$, which is allowed non-symmetric influence. Filling out $\Phi_{ij}$ this way preserves the antisymmetry of the Slater determinant, and provides sufficient flexibility to capture electronic correlation. Large neural networks and powerful parallel accelerators have allowed this approach to scale to systems of around 40 electrons with state-of-the-art results \citep{von_glehn_self-attention_2023}.

\section{Variational Quantum Monte Carlo}
The following is a brief summary of the core aspects VQMC, but it is by no means exhaustive. Those experienced with the framework will likely find it redundant, and those completely new will probably find it incomplete. For the latter group, we recommend either \cite{becca_sorella_2017} or \cite{martin_reining_ceperley_2016}, which provide much more exhaustive treatments of the relevant background.

\subsection{The Born-Oppenheimer Hamiltonian}
While our wavefunction ansatz is general enough to describe any system of Fermions, in this paper we will focus on quantum chemical systems comprised of electrons. Similar to \cite{pfau_ab-initio_2020, deepqmc} we will work in the Born-Oppenheimer approximation, where the molecular nuclei are treated as fixed, and the Hamiltonian governing the forces acting on the electrons is given by 
%
%
\begin{align}
  \hspace{-10 pt} \hat H = -{\nabla^2 \over 2} + \left[ \sum_{i > j} {1 \over|r_i - r_j|} - \sum_{i,I} {Z_I \over |r_i - R_I|} + \sum_{I >J} {Z_I Z_J \over |R_I - R_J|} \right] \label{eq:QMC-hamil}  \hspace{5 pt} \boxed{\begin{array}{c}  r_i \text{: electron positions} \\ 
R_i \text{: nuclei positions} \\ 
Z_I \text{:  nuclear charges} 
  \end{array}}\tag{BO}
\end{align}
The terms inside the square brackets are often written simply as $V(r)$, since they represent the potential energy of the electrons. Analogously, $-\nabla^2$ is the kinetic energy operator, and interpreted as the squared momentum operator $p = i \nabla$, since $\nabla^2 = \div(\nabla) = \ang{i \nabla, i \nabla}$. The ground state $\Psi_g$ for a given system $\hat H$, is nothing more than the eigenfunction corresponding to the smallest eigenvalue $E_1$ of $\hat H$.

\subsection{The Variational Principle}
The \textit{variational} aspect of VQMC comes from the following proposition:
\begin{prop}{Variational Principle\\}
  Let $\Psi_1$ be minimum eigenvector of the Hermitian operator $\hat H$, that is, the eigenfunction associated to the smallest eigenvalue $E_1$. Then
  \begin{equation}
    \Psi_1 = \underset{\Psi}{\mathrm{argmin}} {\ang{ \Psi \vert \hat H \vert \Psi} \over \ang{\Psi \vert \Psi} } \label{eq:var} \tag{VAR}
  \end{equation}
\end{prop}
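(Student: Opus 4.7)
The plan is to prove this via spectral decomposition, which is the standard approach for the variational principle in quantum mechanics. First I would invoke the spectral theorem for Hermitian operators to write any admissible $\Psi$ in the eigenbasis of $\hat H$. That is, assuming the eigenfunctions $\{\Psi_i\}_{i \geq 1}$ form a complete orthonormal basis of the relevant Hilbert space (e.g.\ $L^2(\R^{3N})$) with real eigenvalues $E_1 \leq E_2 \leq \cdots$, I expand $\Psi = \sum_i c_i \Psi_i$ with $c_i = \langle \Psi_i | \Psi \rangle$.

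Next I would compute both halves of the Rayleigh quotient explicitly. Using orthonormality and $\hat H \Psi_i = E_i \Psi_i$, one gets $\langle \Psi | \hat H | \Psi \rangle = \sum_i |c_i|^2 E_i$ and $\langle \Psi | \Psi \rangle = \sum_i |c_i|^2$. Then the bound
\begin{equation*}
  \frac{\langle \Psi | \hat H | \Psi \rangle}{\langle \Psi | \Psi \rangle} = \frac{\sum_i |c_i|^2 E_i}{\sum_i |c_i|^2} \geq E_1 \cdot \frac{\sum_i |c_i|^2}{\sum_i |c_i|^2} = E_1
\end{equation*}
follows by replacing each $E_i$ by the smallest, $E_1$. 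Finally I would verify that equality holds precisely when $c_i = 0$ for every $i$ with $E_i > E_1$, so that $\Psi$ lies in the $E_1$-eigenspace; in the non-degenerate case this forces $\Psi \propto \Psi_1$, which (after normalization and up to a global phase) recovers the claim.

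The main obstacle is technical rather than conceptual: $\hat H$ as given by \eqref{eq:QMC-hamil} is an \emph{unbounded} differential operator with Coulombic singularities, so one must justify the spectral decomposition by restricting to a suitable dense domain (e.g.\ $H^2(\R^{3N})$ intersected with the antisymmetric subspace), invoke self-adjointness of the Born-Oppenheimer Hamiltonian (a classical result of Kato), and handle the possible presence of continuous spectrum by noting that $E_1$ below the essential spectrum is attained by a genuine $L^2$ eigenfunction. A secondary subtlety is that the argmin is only unique up to scalar multiplication and, in the degenerate case, up to choice within the ground-state eigenspace; I would state the proposition accordingly. Given the expository nature of the paper, I would likely present the clean finite-dimensional style computation above and relegate these functional-analytic details to a remark pointing to \cite{becca_sorella_2017} or \cite{martin_reining_ceperley_2016}.
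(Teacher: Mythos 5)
Your proposal takes essentially the same route as the paper's proof: expand $\Psi$ in the orthonormal eigenbasis supplied by the spectral theorem, reduce the Rayleigh quotient to the weighted average $\sum_n |c_n|^2 E_n \big/ \sum_n |c_n|^2$, and conclude that the minimum is attained by concentrating all weight on the lowest eigenvalue. If anything your write-up is a bit more careful than the paper's (explicit $\geq E_1$ bound, correct $|c_i|^2$ moduli, and the remarks on self-adjointness, essential spectrum, and degeneracy, which the paper silently elides), but the underlying argument is the same.
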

The term inside $\argmin$ is often called the \textit{Rayleigh quotient} and so denoted by $R(\Psi)$. Proving this proposition amounts to hardly more than a little bookkeeping, see \Cref{proof:var-prin}
The variational principle comes in very handy, providing a straightforward technique to solve for the ground state $\Psi_1$ -- simply minimize the Rayleigh $R$ quotient over all admissible $\Psi$. Practically, however, there are a few complications, which we'll discuss next.

\subsection{The Local Energy and its Vanishing Variance}
The variational energy $E = R(\Psi_\theta)$, for any parameteric configuration of $\theta$ can in principle be computed from the inner product in \eqref{eq:var}. Since we are limited to approximating the energy with a finite average in practice, this is suboptimal -- the variance will be prohibitive, and we will need a tremendous number of samples to achieve a reasonable estimate. A much better approach is to instead define the local energy (at any point $x \in \R^{3N}$ )
\begin{equation}
  E_\text{loc}(\Psi_\theta)(x) = {[\hat H  \Psi_\theta](x) \over \Psi_\theta(x)} \label{eq:local-eng}
\end{equation}
it is easy to then see that 
\[ \mathbb{E}_{x \sim \Psi_\theta^2} [E_\text{loc}(x)]= E \]
where $x \sim \Psi_\theta^2$ is taken to mean that the $x$ are drawn from the normalized distribution, ${\Psi_\theta^2 \over \ang{ \Psi_\theta | \Psi_\theta}}$. Reformulating the energy approximation in terms of $E_\text{loc}$ instead of the Rayleigh quotient has two immediate advantages 
\begin{enumerate}
  \item Since we are sampling from $\Psi_\theta^2$, we avoid regions of low importance, significantly reducing the variance of our estimator. 
  \item The local energy is invariant to rescaling by any constant, so normalizing the wavefunction $\Psi_\theta$ is unnecessary.
  Combined with the ability of MCMC to (asymptotically) draw samples from the un-normalized density $\Psi_\theta^2$, we can avoid estimating the partition constant altogether.
\end{enumerate}
VQMC also benefits from a third seemingly magical property: as we update $\theta$ to minimize $E$, the variance \textit{vanishes}. To see why, consider the following:
\begin{proposition}
  Let $\Psi_1$ be the eigenfunction associated to the ground state energy $E_1$ (as before), then 
  \begin{equation}
    E_{\text{loc}}(\Psi_1)(x) = E_1 \qquad  \forall x \in \R^{3N} \label{eq:e_loc_novar}
  \end{equation}
  i.e the local energy is constant at optimality. Assuming that $\Psi_\theta \longrightarrow \Psi_1$  as we optimize $\theta$, the variance of \eqref{eq:local-eng} decreases, meaning our estimates of the energy and its gradients improve as $\Psi$ approaches the ground state.
\end{proposition}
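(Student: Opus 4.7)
The proof should be essentially immediate from the eigenvalue definition of $\Psi_1$ together with the definition of the local energy. My plan is to unwind the definitions and observe that the two cancel.

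First, recall that $\Psi_1$ satisfies the time-independent Schr\"odinger equation for the operator $\hat H$ in \eqref{eq:QMC-hamil}, namely $\hat H \Psi_1 = E_1 \Psi_1$ pointwise on $\R^{3N}$. Substituting this into the definition of the local energy \eqref{eq:local-eng}, I would write
\[
E_{\text{loc}}(\Psi_1)(x) \;=\; \frac{[\hat H \Psi_1](x)}{\Psi_1(x)} \;=\; \frac{E_1 \, \Psi_1(x)}{\Psi_1(x)} \;=\; E_1,
\]
valid at every $x \in \R^{3N}$ where $\Psi_1(x) \neq 0$. This establishes \eqref{eq:e_loc_novar}.

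For the second claim about vanishing variance, the idea is that since $E_{\text{loc}}(\Psi_1)$ is the constant function $E_1$, if $\Psi_\theta \to \Psi_1$ in a sufficiently strong sense then $E_{\text{loc}}(\Psi_\theta)(x) \to E_1$ pointwise (away from the nodal set), and therefore
\[
\mathrm{Var}_{x \sim \Psi_\theta^2}\bigl[E_{\text{loc}}(\Psi_\theta)(x)\bigr] \;\longrightarrow\; 0.
\]
I would note that the previous display already gives $\mathbb{E}[E_{\text{loc}}(\Psi_\theta)] = E$, so the variance can equivalently be written as $\mathbb{E}_{x \sim \Psi_\theta^2}[(E_{\text{loc}}(\Psi_\theta)(x) - E)^2]$, which converges to zero by the dominated convergence theorem (or a direct continuity argument) once convergence of $\Psi_\theta$ to $\Psi_1$ is posited.

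The only mild subtlety, and thus the main place one must be careful, is at the nodal set $\{\Psi_1(x) = 0\}$, where the quotient defining $E_{\text{loc}}$ is formally $0/0$. In the physics literature one handles this by restricting attention to the complement of the nodal set (which has full measure under $\Psi_1^2$) and by noting that near nodes both numerator and denominator of $\hat H \Psi_1/\Psi_1$ vanish together, keeping the ratio equal to $E_1$ as a limit. I would mention this caveat briefly but would not belabor it, as the statement of the proposition is understood pointwise almost everywhere with respect to the sampling density.
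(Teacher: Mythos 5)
Your proof is correct and follows the same route as the paper: substitute $\hat H \Psi_1 = E_1 \Psi_1$ into the definition of the local energy and cancel, giving $E_1$ pointwise. Your extra remarks on the nodal set and the vanishing variance are reasonable but go beyond what the paper's one-line proof addresses.
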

\begin{proof}
  \Cref{eq:e_loc_novar} is a direct consequence of the variational principle; $\Psi_1$ is an eigenvector of $\hat H$, thus $\hat H \Psi_1 = E_1 \Psi_1$, which yields
  \[ E_\text{loc}(\Psi_1)(x) = {\hat H \Psi_1(x) \over \Psi_1(x)} = {E_1 \Psi_1(x) \over \Psi_1(x)} = E_1 \]
\end{proof}

\subsection{Derivatives of the Energy Functional}
Performing any sort of first order optimization over the parameters of our ansatz $\Psi_\theta$ to minimize its energy, denoted $E(\theta)$, will also require its gradients. Following the preceding section, we will estimate $E(\theta)$ via the following Monte-Carlo estimate 
\begin{equation}
  E(\theta) \approx \bar E(\theta) = \sum_{i=1}^N E_\text{loc}(\Psi_\theta)(x_i) \qquad x_i \sim \Psi^2_\theta \label{eq:mcmc_eng}
\end{equation}
Any MCMC method (such as Random-Walk Metropolis Hastings for example) can be used to draw the $x_i$, which we will detail further in the next section. Approximating the energy via \eqref{eq:mcmc_eng}, while superior for the reasons laid out above, comes at the price of making the computation of its gradients more involved than naively applying automatic differentiation to each term in the finite sum. Differentiating ${\partial E \over \partial \theta}$ correctly requires also accounting for the influence of the parameters on the distribution $\Psi_\theta^2$ which the $x_i$ are drawn from. Like in many other \ other works \cite{pfau_ab-initio_2020, von_glehn_self-attention_2023, becca_sorella_2017, acevedo_vandermonde_2020, deepqmc, gerard2022gold}, we use the gradient estimator 
\newcommand{\ExPsi}[1]{\underset{#1 \sim \Psi_\theta^2}{\mathbb{E}}}

\begin{equation}
    \boxed{\nabla E (\theta) = 2 {1 \over n} \sum_{i=1}^n \left[ E_\text{loc}(x_i) \nabla \log \Psi_\theta (x_i) - 2 \left( {1 \over n} \sum_{i=1}^n \left[ E_\text{loc}(x_i)  \right] \right)\nabla \log \Psi_\theta (x_i) \right]}
    \label{eq:eng-grad}
\end{equation}
where $x_i \sim \Psi^2_\theta$ are drawn using MHMC, but we offer a full derivation in \Cref{apx:grad-est}. Notably, this can be seen as an instance of the REINFORCE estimator widely used in reinforcement learning \cite{glynn1990reinforce}.

\textbf{Remark: Isn't that estimate biased?\\}
Mathematically, it is true that since we are reusing the same batch of $x_i$ to estimate both expectations in \eqref{eq:eng-grad} via \eqref{eq:eng-grad-finite}, there will be bias introduced -- the covariance between the two portions of the second term will be non-zero. Vanishing variance of the energy, however, will save us again here, since we can upper bound the covariance using Cauchy-Schwarz as 
\begin{gather}
  X =  {1 \over n} \sum_{i=1}^n \left[ E_\text{loc}(x_i)\right]  \qquad Y = {1 \over n} \sum_{i=1}^n \left[\nabla \log \Psi_\theta (x_i)  \right] \qquad \text{cov}(X,Y) \leq \sqrt{ \text{Var}(X) \text{Var}(Y)}
\end{gather}
and we know $\text{Var}(X) \longrightarrow 0$ as our estimate of the ground state improves. So for practical purposes, this correlation is negligible.




\section{The Sortlet Ansatz}
\label{sec:ansatz}
\subsection{The Sortlet Ansatz}

We are ready to define what we coin the \textit{Sortlet Ansatz}.  First, let's define what we call a \textit{sortlet}, denoted by $\Psi_\alpha$:
\begin{equation}
  \Psi_\alpha(r) = \sigma(\pi_\alpha)\prod_{i=1}^N \left(\alpha_{i+1}(r) - \alpha_i(r)\right) \label{eq:sortlet}
\end{equation}
where
\begin{itemize}
  \item $\alpha: \R^{ N \times 3} \to \R^N$, is assumed permutation equivariant, and reindexed such that $\alpha_i  < \alpha_{i+1} : 1 \leq i \leq N - 1$. \textbf{This is the parameterized, neural network backbone.}
  \item $\pi_\alpha$ is the permutation on $N$ letters which re-indexes (sorts) the tuple $\alpha(r)$
  \item $\sigma(\pi_\alpha)$ is the parity, equivalently number of transpositions in $\pi_\alpha$, mod 2.
\end{itemize}

Analogously to other VQMC ansatz, we will employ an expansion of sortlets to serve as our sortlet ansatz in VQMC. Specifically, our $\Psi$ is expanded as 
\begin{equation}
  \Psi = \exp [J_\beta] \sum_{i=1}^K \Psi_{\alpha_i}  e^{-\gamma_i  \sum_j |r_j|}\label{eq:fund-ans}
\end{equation}
where just as in \cite{deepqmc, von_glehn_self-attention_2023}, (with $\rho_i$ returning the spin of electron $i$) the simple Jastrow factor that we decided to employ is:
\begin{equation}
    J_\beta = \sum_{i < j, \rho_i = \rho_j} {-1 \over 4} {\beta_1 \over \beta_1^2 + |r_i - r_j|} + \sum_{i < j, \rho_i \neq \rho_j} {-1 \over 2} {\beta_2 \over \beta_2^2 + |r_i - r_j|}
\end{equation}

As we will show in \Cref{sec:ansatz-properties}, with discontinuous $\alpha$ we can exactly represent the ground state wavefunction with a single sortlet. Since the $\alpha$ we use in practice needs to be continuous for optimization, however, we generally want to make $K$ as large as possible to make optimization easier (see \Cref{fig:B-ablation}).

\paragraph{Complexity:} The sortlet for a given $\alpha$ can be computed in time $O(N \log N)$. Taking $K = O(N)$ and with the assumption $\alpha(r)$ can be evaluated in time $O(N^2)$, we have that $\Psi$ can be evaluated in time $O(N^2 \log N)$, significantly faster than a determinant. 

\begin{figure}[b]
  \label{fig:first-row}
  \centering
  \bgroup
  \def\arraystretch{1.15}%
\begin{tabular}{lrrrrr}
 & \multicolumn{1}{l}{} & \multicolumn{2}{c}{$O(N^2)$ Ansatz} & \multicolumn{2}{c}{$O(N^3)$ Ansatz} \\
\rowcolor[HTML]{FFFFFF} 
\multicolumn{1}{c}{\cellcolor[HTML]{FFFFFF}Molecule} & \multicolumn{1}{c}{\cellcolor[HTML]{FFFFFF}\begin{tabular}[c]{@{}c@{}}Exp.\\ Energy\end{tabular}} & \multicolumn{1}{c}{\cellcolor[HTML]{FFFFFF}\begin{tabular}[c]{@{}c@{}}Sortlet \\(Ours)\end{tabular}} & \multicolumn{1}{c}{\cellcolor[HTML]{FFFFFF}\begin{tabular}[c]{@{}c@{}}Vandermonde \\ \cite{pang_on2_2022}\end{tabular}} & \multicolumn{1}{c}{\cellcolor[HTML]{FFFFFF}\begin{tabular}[c]{@{}c@{}}Non-learned \\ QMC\cite{qmc_small_mol, rios_qmc}\end{tabular}} & \multicolumn{1}{c}{\cellcolor[HTML]{FFFFFF}FermiNet \cite{pfau_ab-initio_2020}} \\ \hline
\rowcolor[HTML]{D9FFD8} 
\cellcolor[HTML]{FFFFFF}Li & \multicolumn{1}{r|}{\cellcolor[HTML]{FFFFFF}-7.4780} & {\color[HTML]{000000} -7.477(8)} & \multicolumn{1}{r|}{\cellcolor[HTML]{D9FFD8}{\color[HTML]{000000} -7.4782}} & {\color[HTML]{000000} -7.4780} & {\color[HTML]{000000} -7.4779} \\
\rowcolor[HTML]{FFFFFF} 
LiH & \multicolumn{1}{r|}{\cellcolor[HTML]{FFFFFF}-8.0705} & \cellcolor[HTML]{D9FFD8}{\color[HTML]{000000} -8.070(3)} & \multicolumn{1}{r|}{\cellcolor[HTML]{FFFFFF}{\color[HTML]{000000} -}} & \cellcolor[HTML]{D9FFD8}{\color[HTML]{000000} -8.070} & \cellcolor[HTML]{D9FFD8}{\color[HTML]{000000} -8.7050} \\
\rowcolor[HTML]{D9FFD8} 
\cellcolor[HTML]{FFFFFF}Be & \multicolumn{1}{r|}{\cellcolor[HTML]{FFFFFF}-14.6673} & {\color[HTML]{000000} -14.667(1)} & \multicolumn{1}{r|}{\cellcolor[HTML]{D9FFD8}{\color[HTML]{000000} -14.6673}} & {\color[HTML]{000000} -14.6671} & {\color[HTML]{000000} -14.6673} \\
\rowcolor[HTML]{FFFFFF} 
Li2 & \multicolumn{1}{r|}{\cellcolor[HTML]{FFFFFF}-14.9947} & \cellcolor[HTML]{D9FFD8}{\color[HTML]{000000} -14.994(5)} & \multicolumn{1}{r|}{\cellcolor[HTML]{FFFFFF}{\color[HTML]{000000} -}} & \cellcolor[HTML]{D9FFD8}{\color[HTML]{000000} -14.9555} & \cellcolor[HTML]{D9FFD8}{\color[HTML]{000000} -14.9947} \\
\rowcolor[HTML]{D9FFD8} 
\cellcolor[HTML]{FFFFFF}B & \multicolumn{1}{r|}{\cellcolor[HTML]{FFFFFF}-24.6539} & {\color[HTML]{000000} -24.652(7)} & \multicolumn{1}{r|}{\cellcolor[HTML]{FFE0DE}{\color[HTML]{000000} 24.5602}} & {\color[HTML]{000000} -24.6533} & {\color[HTML]{000000} -24.6537} \\
\cellcolor[HTML]{FFFFFF}C & \multicolumn{1}{r|}{\cellcolor[HTML]{FFFFFF}-37.8450} & \cellcolor[HTML]{FFE0DE}{\color[HTML]{000000} -37.83(1)} & \multicolumn{1}{r|}{\cellcolor[HTML]{FFE0DE}{\color[HTML]{000000} -37.3531}} & \cellcolor[HTML]{D9FFD8}{\color[HTML]{000000} -37.8437} & \cellcolor[HTML]{D9FFD8}{\color[HTML]{000000} -37.8447} \\
\cellcolor[HTML]{FFFFFF}N & \multicolumn{1}{r|}{\cellcolor[HTML]{FFFFFF}-54.5892} & \cellcolor[HTML]{FFE0DE}{\color[HTML]{000000} -54.01(8)} & \multicolumn{1}{r|}{\cellcolor[HTML]{FFE0DE}{\color[HTML]{000000} -53.1855}} & \cellcolor[HTML]{D9FFD8}{\color[HTML]{000000} -54.5873} & \cellcolor[HTML]{D9FFD8}{\color[HTML]{000000} -54.5888} \\
\rowcolor[HTML]{FFFFFF} 
CH4 & \multicolumn{1}{r|}{\cellcolor[HTML]{FFFFFF}-} & -40.20(1) & \multicolumn{1}{r|}{\cellcolor[HTML]{FFFFFF}-} & -40.4416 & -40.5140
\end{tabular}
\egroup
    \caption{Results from applying the Sortlet ansatz to a selection of atoms and small molecules. Energy values are all given in Hartree (Ha) atomic units. Chemical accuracy is defined to be within 1.5 mHa of the experimental values, green denotes values within this tolerance, red outside. Uncertainty is included in the parenthesis around the last digit -- see \Cref{apx:epx-details}. }
\end{figure}

\subsection{Properties of the Sortlet Ansatz}
\label{sec:ansatz-properties}
\begin{proposition}
  The Sortlet Ansatz satisfies the generalized Pauli exclusion principle -- it is anti-symmetric to the exchange of electrons with the same spin.
\end{proposition}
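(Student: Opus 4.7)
The plan is to reduce the claim to two simple observations about how a transposition of two same-spin electrons acts on (i) the sorted tuple of $\alpha$-values and (ii) the parity of the sorting permutation $\pi_\alpha$. Let $\tau = (i\, j)$ denote the transposition that swaps two electrons of the same spin in the input $r$. Because $\alpha$ is assumed permutation-equivariant (with respect to same-spin particles), $\alpha(\tau \cdot r)$ is obtained from $\alpha(r)$ by swapping the $i$-th and $j$-th components. In particular, as a multiset the components of $\alpha(\tau \cdot r)$ coincide with those of $\alpha(r)$, so the sorted tuple is identical in both cases.

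Next I would track the two pieces of the sortlet separately. The product $\prod_{i} \bigl(\alpha_{i+1}(r) - \alpha_i(r)\bigr)$ is evaluated after reindexing by the sorting permutation, so it is a function of the sorted tuple only. By the previous step this product is unchanged under $r \mapsto \tau \cdot r$. For the sign, let $\pi_\alpha$ be the permutation that sorts $\alpha(r)$; then the permutation that sorts $\alpha(\tau \cdot r)$ is $\pi_\alpha \circ \tau$ (since one must first undo the swap before applying $\pi_\alpha$). Hence
\begin{equation*}
  \sigma(\pi_{\alpha,\, \tau r}) \;=\; \sigma(\pi_\alpha)\,\sigma(\tau) \;=\; -\,\sigma(\pi_\alpha).
\end{equation*}
Multiplying the two pieces gives $\Psi_\alpha(\tau \cdot r) = -\Psi_\alpha(r)$, establishing antisymmetry for a single sortlet.

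To finish the proof for the full ansatz in \eqref{eq:fund-ans}, I would note that antisymmetry is preserved under linear combinations, so $\sum_i \Psi_{\alpha_i}$ is antisymmetric; the envelope $e^{-\gamma_i \sum_j |r_j|}$ and the Jastrow factor $\exp[J_\beta]$ are manifestly symmetric under permutations of (same-spin) electrons, since they depend only on the multiset $\{|r_j|\}$ and on unordered pairwise distances $|r_i - r_j|$. Multiplying a symmetric function by an antisymmetric one yields an antisymmetric function, so the full $\Psi$ satisfies \eqref{eq:antisymmetry}.

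The only subtle point, and the step I would spend the most care on, is the bookkeeping around $\pi_\alpha$: one must verify that the mapping $\pi_\alpha \mapsto \pi_\alpha \circ \tau$ is the correct update to the sorting permutation under the input swap, and that the convention "reindex $\alpha$ so that $\alpha_i < \alpha_{i+1}$" is consistent at points where two entries of $\alpha$ happen to coincide (on this measure-zero set, two adjacent factors in the product vanish, which both resolves the sign ambiguity and gives the nodal behaviour one expects of a fermionic wavefunction). Everything else is routine.
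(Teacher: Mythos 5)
Your proof is correct and follows essentially the same route as the paper's: the product of sorted pairwise differences is invariant because sorting erases the permutation, while the sorting permutation absorbs the transposition $\tau$ (the paper writes this as $\pi_{\alpha(r)}\tau^{-1} = \pi_{\alpha(\tau r)}$), flipping the parity and hence the sign. Your additional bookkeeping --- checking that the sum over sortlets, the envelopes, and the spin-resolved Jastrow factor preserve antisymmetry, and noting that coincident $\alpha$-values lie on the nodal set where the product vanishes --- is left implicit in the paper but is consistent with it.
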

\begin{proof}
  The product $\prod_{i=1}^N [\alpha_{i+1} - \alpha_i]$ is invariant by design since the terms are always sorted before taking the pairwise difference, negating any permutation. $\sigma(\pi_\alpha)$ flips sign upon each odd permutation, since $\sigma$ is multiplicative, and undoing the permutation then sorting the original array is equivalent to sorting the permutated array, i.e if $\tau$ is any transposition of two electrons with the same spin, $\pi_{\alpha(r)} \tau^{-1}  = \pi_{\alpha(\tau r)}$
  and so 
  \begin{equation}
    \sigma(\pi_{\alpha(\tau r)}) = \sigma(\pi_\alpha)\sigma(\tau^{-1}) = - \sigma(\pi_\alpha)
  \end{equation}
  since any transposition has parity $-1$.
\end{proof}
Note that to avoid over-constraining the ansatz by also making it anti-symmetric to transposition of opposite spin electrons -- something the true wavefunction is \textbf{not} required to satisfy -- we use the same trick as in \cite{von_glehn_self-attention_2023}, attaching a spin $\pm 1$ term to each input electron coordinate $r_i$ as $(r_i, \pm 1)$, which breaks equivariance of the output $\alpha (r)$ for those pairs. 

\begin{proposition}
  Each $\Psi_\alpha$ is continuously once-differentiable on $\R^{3N}$, meaning ${\partial \Psi_\alpha \over \partial r^i}$ is continuous for all $i$. 
\end{proposition}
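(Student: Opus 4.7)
The plan is to reduce the claim to smoothness of a fixed finite-dimensional sorting map. Define $F : \R^N \to \R$ by
\[
F(y) = \sigma(\pi_y)\prod_{i}\bigl(y_{(i+1)} - y_{(i)}\bigr),
\]
where $y_{(k)}$ is the $k$-th order statistic of $y$ and $\pi_y$ is the permutation that sorts $y$. Then $\Psi_\alpha = F \circ \alpha$, and since $\alpha$ is smooth by assumption, the chain rule reduces the claim to $F \in C^1(\R^N)$. On the dense open set $U = \R^N \setminus D$, where $D = \bigcup_{i<j}\{y_i = y_j\}$, $U$ decomposes into the $N!$ open Weyl chambers $R_\pi = \{y : y_{\pi^{-1}(1)} < \cdots < y_{\pi^{-1}(N)}\}$; on each chamber $\pi_y \equiv \pi$, so $F|_{R_\pi}$ is a polynomial and $F$ is automatically smooth on $U$. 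The work is then to extend $C^1$-smoothly across $D$.

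I would first handle \emph{generic} points $y_0 \in D$ lying on exactly one hyperplane $\{y_i = y_j\}$. Locally only two chambers $R_1, R_2$ are involved: they differ by a transposition of the raw indices $i, j$, whose sort positions $k, k+1$ are adjacent. Two sign flips take place: $\sigma(\pi_{R_1}) = -\sigma(\pi_{R_2})$, and the corresponding sorted-difference factor is $(y_j - y_i)$ on $R_1$ versus $(y_i - y_j)$ on $R_2$. These cancel, so in a neighborhood of $y_0$ one can write uniformly
\[
F(y) = \sigma(\pi_{R_1})\,(y_j - y_i)\,P(y),
\]
with $P|_{R_1}$ and $P|_{R_2}$ the remaining products of $N-2$ sorted differences, each a polynomial. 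The only factors of $P|_{R_1}$ and $P|_{R_2}$ that actually differ are the two ``neighbors'' of the extracted factor (involving the sort positions $k-1$ and $k+2$), and direct substitution shows they coincide when $y_i = y_j$; hence $P$ is continuous across the hyperplane. Then
\[
\nabla F = \sigma(\pi_{R_1})\bigl[(\nabla y_j - \nabla y_i)\,P + (y_j - y_i)\,\nabla P\bigr]
\]
extends continuously across $\{y_i = y_j\}$ because the second summand vanishes there and the first depends only on the continuous $P$. Thus $F$ is $C^1$ on the generic part of $D$.

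For non-generic points of $D$ (where multiple coincidences occur simultaneously), I would argue by connectedness. Any two chambers incident to a non-generic $y_0$ are joined by a chain of elementary transpositions exchanging raw indices whose $y_0$-coordinates coincide, each step being a codimension-1 crossing to which the previous argument applies. The one-sided limits of $\nabla F$ from all chambers incident to $y_0$ therefore agree, giving a unique continuous extension of $\nabla F$ at $y_0$. This establishes $F \in C^1(\R^N)$, and so $\Psi_\alpha = F \circ \alpha \in C^1(\R^{3N})$.

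The main obstacle is the careful bookkeeping at codimension-1 crossings: verifying that the $(y_j - y_i)$ factor extracts cleanly along with the cancelling sign flip, and that the residual polynomial $P$ is genuinely continuous across the hyperplane. Once this is established, the higher-codimension case is a soft connectivity argument that reduces to the generic one.
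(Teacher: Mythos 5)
Your argument is correct and rests on the same cancellation mechanism as the paper's proof: away from coincidences the sortlet is smooth, at a simple coincidence the parity flip of $\sigma(\pi_\alpha)$ is absorbed by the sign change of the single vanishing sorted difference, and at multiple coincidences the gradient degenerates. The differences are in packaging and in one of the case analyses. You reduce to a fixed map $F$ on $\R^N$ and work with the chamber decomposition, which lets you isolate exactly which factors change across a wall; in particular you correctly observe that the two neighbouring sorted differences are only \emph{continuous} across the wall and must be controlled by the extracted vanishing factor $(y_j - y_i)$. The paper's ``$k$ unique'' case instead asserts that $\Psi_\alpha$ is locally smooth after fixing the order of the two crossing values, which glosses over precisely these neighbour factors; your version is the more careful one and still delivers the $C^1$ conclusion (and makes clear why one should not expect more than $C^1$ there). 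Conversely, at points with several coincidences the paper's argument is more direct than yours: at least two sorted gaps vanish, so every term produced by the product rule contains a vanishing factor, all chamber-wise gradient limits are $0$, and an $O(|h|^2)$ bound on $F$ near such a point gives differentiability with zero gradient; your chain-of-adjacent-transpositions argument does establish that the limits agree, but you could simply note that they are all zero. Finally, both you and the paper pass from ``the gradient extends continuously across the coincidence set'' to ``$F$ is $C^1$''; it is worth one explicit sentence (mean value theorem along segments whose interiors avoid the walls, or the direct estimate $F(y_0+h) = L(h)P(y_0) + O(|h|^2)$ at wall points) to close that step.
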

We defer the full proof to \Cref{apx:sortlet-smooth}.


\section{Numerical Experiments}\label{sec:experiments}
\subsection{Evaluation on first row atoms and small molecules}

\Cref{fig:first-row} collects the results of running QMC with our Sortlet ansatz on the ground state for a small collection of first row atoms and small molecules. We see that up to and including Boron, we are able to recover the ground state experimental energy to chemical accuracy. For Carbon and beyond, while we no longer achieve chemical accuracy, our method still outperforms the previous  $O(N^2)$ method of \cite{pang_on2_2022}.

\subsection{Performance as a function of sortlets on Boron}

In \Cref{fig:B-ablation}, we show an ablation study comparing performance in terms of absolute difference to the empirical ground state values after $50,000$ optimization steps, against the number of sortlets on Boron. While all except one of the runs terminate at less than 20 mHa to the ground state, we see that those with $K > 16$ reach a slightly lower error on average, but achieve that value significantly faster than those with $K < 16$. Size of the network is kept constant for all runs (with the exception of the linear output layer), the only variable explored in this study  is the number of terms $K$.

\begin{figure}[bh]
\begin{subfigure}[t]{0.59\textwidth}

\hspace{-0.5cm}
\includegraphics[width=1.2\linewidth]{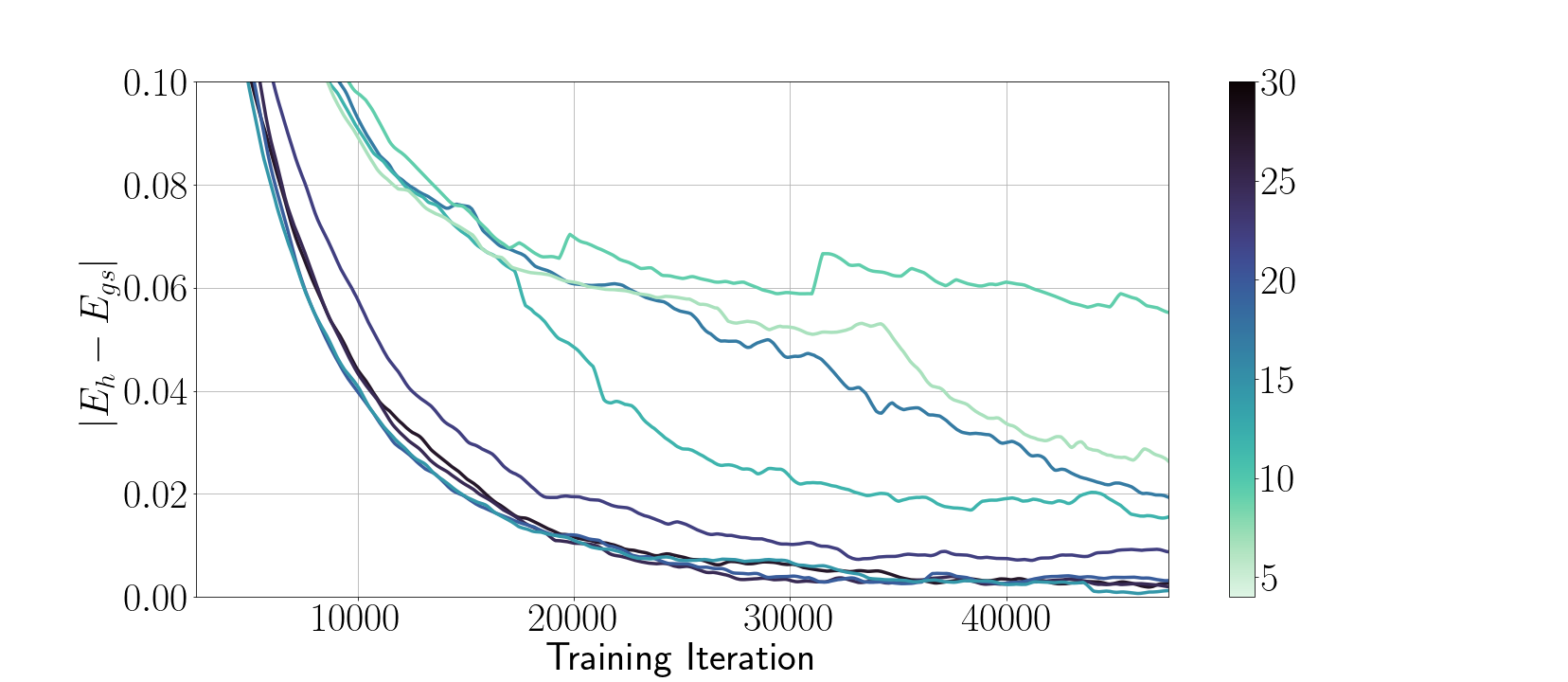}
\caption{\label{fig:B-ablation} Comparing the number of Sortlets $K$ (color) against the error to the empirical ground state energy.}
\end{subfigure}\hfill \hspace{-10pt}
\begin{subfigure}[t]{0.37\textwidth}

\def\Rs{-0.35}
\def\CircS{0.35}
\def\c2{0.78\Rs}

\hspace{-18pt}
\begin{tikzpicture}
    \node (C) at (0,\c2) {};
    \node at (1.38\Rs,1.62\Rs) {\includegraphics[width=\linewidth]{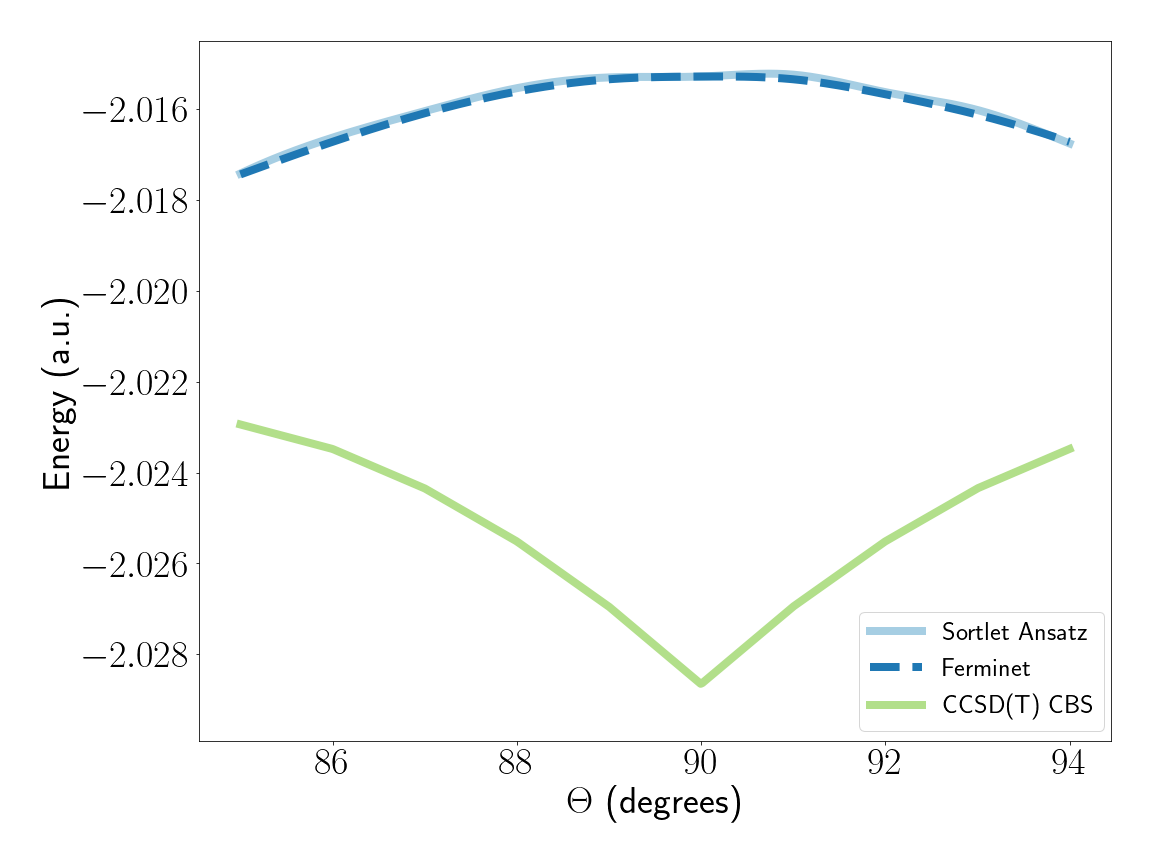}};
    \draw[dashed, color=atomictangerine] (C) circle ({veclen(\Rs, \c2)});

    \node[] (H1) at (\Rs,0) {}; 
    \node[] (H2) at (-\Rs,0) {}; 
    \node[] (H3) at (\Rs,1.15\Rs) {}; 
    \node[] (H4) at (-\Rs,1.15\Rs) {}; 

    \draw[dashed, color=atomictangerine] (H2) -- (0, \c2);
    \draw[dashed, color=atomictangerine] (H1) -- (0, \c2);
    \pic [draw=atomictangerine,, -, "$\Theta$"scale=0.55, angle radius=13, angle eccentricity=0.5, dashed] {angle = H1--C--H2};
    \draw[] (H1) -- (H2) -- (H4) -- (H3) -- (H1);

    \node[draw,circle, scale =\CircS, fill=white] (L1) at (\Rs,0) {H}; 
    \node[draw,circle, scale =\CircS, fill=white] (L2) at (-\Rs,0) {H}; 
    \node[draw,circle, scale =\CircS, fill=white ] (L3) at (\Rs,1.15\Rs) {H}; 
    \node[draw,circle, scale =\CircS, fill=white ] (L4) at (-\Rs,1.15\Rs) {H};

\end{tikzpicture}
\caption{\label{fig:H4-geo} Reproduction of $H_4$ potential energy surface from the FermiNet paper. 
}
\end{subfigure}
\caption{Boron Sortlet Ablation and $H_4$ rectangle.}
\end{figure}

\subsection{Reproduction of $H_4$ potential energy curve}

In \Cref{fig:H4-geo} we reproduced the $H_4$ experiment from \cite{pfau_ab-initio_2020}, where the geometry of the $H_4$ rectangle was linearly adjusted by varying the bond length of the bottom two hydrogen atoms. In agreement with  FermiNet, we were able to reproduce the curve showing a maximum at $\Theta = 90 \deg$, not a minimum like the benchmark Coupled Cluster calculations.
\newcommand{\psipang}{\psi'_\text{pair}}
\section{Comparison with other Pairwise Constructions and Universality}
In this section, we show a weak form of universality of the sortlet and that of the Vandermonde construction put forward in \cite{acevedo_vandermonde_2020, pang_on2_2022}. We show that a slight modification of the argument from Theorem 2 in \cite{pang_on2_2022} also yields universality for the sortlet, but similar to the argument in the Appendix of \cite{pfau_ab-initio_2020}, requires discontinuous functions. Theoretical universality so remains an open question, but we finish by discussing the implications of the nodal structure on experimental results.

\subsection{Non-smooth universality}
A nearly identical construction to Theorem 2 in \cite{pang_on2_2022} recovers a similar type of universality for the sortlet:
\begin{proposition}
For a given ground state wavefunction $\Psi_g$, if we allow $\alpha$ to be discontinuous, setting 
\begin{equation}
    \alpha_j = \pi^{*}(r,j) \left( {\Psi^g \left[\pi^*(r)r \right] \over (N-1)} \right)^{1/N}
\end{equation}
yields that $\Psi_{\alpha} = \Psi_g$. Here $\pi^{*}(r)$ is the permutation on $N$ letters that maps $r \in \R^{3N}$ back to the fundamental domain (see \cite{ceperley_fermion_1991}), $\pi^*(r,j) \in [1,N]$ is the index the $j$th electron maps to.
\end{proposition}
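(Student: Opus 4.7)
The plan is to reduce the identity $\Psi_\alpha = \Psi_g$ to a direct computation on the fundamental domain $D \subset \R^{3N}$ and then extend globally using the antisymmetry of $\Psi_\alpha$ (established in the preceding proposition) and the antisymmetry of $\Psi_g$. Recall that $D$ is the set of configurations with $\pi^*(r) = e$; every orbit of the permutation action on $\R^{3N}$ meets $D$ in a single representative, and following \cite{ceperley_fermion_1991} the ground state is taken to have constant sign on $D$.

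First I would fix $r \in D$, so that $\pi^*(r,j) = j$ and $\pi^*(r) r = r$. The defining formula collapses to $\alpha_j(r) = j\,c$ with $c := \bigl(\Psi_g(r)/(N-1)\bigr)^{1/N}$. Choosing the real positive branch of the root on $D$, the tuple $(\alpha_1,\ldots,\alpha_N) = (c,2c,\ldots,Nc)$ is already sorted, so $\pi_\alpha = e$ and every consecutive difference equals $c$. Evaluating the product in \eqref{eq:sortlet}, interpreted cyclically with $\alpha_{N+1} := \alpha_1$ (as suggested by the $N$-fold index range and by the $1/N$ exponent in the proposition), gives
\[
\sigma(\pi_\alpha)\prod_{i=1}^{N}\bigl(\alpha_{i+1}-\alpha_i\bigr) \;=\; c^{\,N-1}\cdot\bigl(-(N-1)c\bigr) \;=\; -(N-1)\,c^{N} \;=\; -\Psi_g(r),
\]
matching $\Psi_g(r)$ up to the overall sign convention that fixes the sortlet's orientation. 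Hence $\Psi_\alpha$ and $\Psi_g$ agree on $D$.

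Second, I would lift the equality to arbitrary $r \in \R^{3N}$ by writing $r = \tau r_D$ with $r_D := \pi^*(r)r \in D$ and $\tau := \pi^*(r)^{-1}$. Permutation equivariance of $\alpha$, implied by $\pi^*(\tau r) = \pi^*(r)\tau^{-1}$ and $\pi^*(\tau r, j) = \pi^*(r,\tau^{-1}(j))$, shows that $\alpha(r)$ is $\alpha(r_D)$ with entries relabeled through $\tau^{-1}$. After the mandatory sorting, the product of consecutive differences is unchanged, while the sorting permutation picks up precisely $\sigma(\pi^*(r))$. Combining with the step-one identity gives $\Psi_\alpha(r) = \sigma(\pi^*(r))\,\Psi_g(r_D) = \Psi_g(r)$ by antisymmetry of $\Psi_g$.

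The main obstacle is justifying the branch choice of the $N$-th root globally: because $\Psi_g$ vanishes on its nodal surface and the sign of $\Psi_g(\pi^*(r)r)$ is only unambiguous once $D$ itself is pinned down, a consistent selection must be made that will jump across $\partial D$ (where two like-spin electrons coincide) and across the nodal set. Any $\alpha$ realized this way is therefore manifestly discontinuous, which is the concession embodied in the phrase ``if we allow $\alpha$ to be discontinuous'' and parallels the obstruction appearing in Theorem 2 of \cite{pang_on2_2022} and in Appendix B of \cite{pfau_ab-initio_2020}. Once the branch is fixed on $D$, the entire argument is just the two-step verification above.
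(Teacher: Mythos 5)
Your proof is correct and follows essentially the same route as the paper's: substituting the prescribed $\alpha_j$ into \eqref{eq:sortlet}, the sorted consecutive gaps are all equal to $c=\bigl(\Psi_g[\pi^*(r)r]/(N-1)\bigr)^{1/N}$, the remaining wrap-around factor contributes $(N-1)c$, and the parity of the sorting permutation reproduces $\sigma(\pi^*)$, giving $\Psi_\alpha=\sigma(\pi^*)\lvert\Psi_g\rvert=\Psi_g$; your split into a fundamental-domain computation plus an equivariance lift is just a more explicit organization of the same calculation (and usefully spells out the final sign step the paper leaves implicit). The global $-1$ you pick up from the cyclic convention $\alpha_{N+1}:=\alpha_1$ reflects an ambiguity already present in the paper's definition of the product (its own proof silently takes the last factor to be $+(N-1)c$), and is physically immaterial.
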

\begin{proof}
The proof is immediate from the definition of $\Psi_\alpha$ in \Cref{eq:sortlet}
\begin{equation}
\Psi_\alpha = \sigma(\pi^*) \prod_{i=1}^N [\alpha_{i+1} - \alpha_i] =  \sigma(\pi^*)  \left( \prod_{i=1}^{N-1} \left[ {|\Psi_g|\over (N-1)}  \right]^{1/n} \right)  (N-1) \left[ {|\Psi_g|\over (N-1)} \right]^{1/n} = \sigma(\pi^*) |\Psi_g|
\end{equation}
\end{proof}

\subsection{Comparison with the Vandermonde Ansatz}
\label{sec:comparison}
In \cite{han2019solving, acevedo_vandermonde_2020} and later \cite{pang_on2_2022}, a similar type of pairwise antisymmetric ansatz was proposed, which takes the form 

\begin{equation}
  \psi'_{\text{pair}}(r) = \prod_{i < j} \left[ \phi_B(r_i, \{r\}) - \phi_B(r_j, \{r\})\right] \label{eq:pairwise}
\end{equation}
where the second argument to $\phi_B$ is denoted as such to indicate the function is invariant to permutations of the electron positions, except for $r_i$ (as our $\alpha$ has been assumed all along). This expression is also equivalent to a Vandermonde matrix determinant (see Equation 1 \cite{pang_on2_2022}), which is why we refer to it simply as the Vandermonde Ansatz. Pang et al claim to have proved this form is universal for ground states, but unfortunately a simple argument shows this is not quite accurate. Ground states for small atoms (Li, Be) have been proven to possess exactly two nodal domains \cite{fermion_nodes_cells_mitas} -- open sets $\mathcal{N} \subseteq \R^{3N}$ where the wavefunction is non-zero. So to represent them exactly, the ansatz must also have two nodal domains. Below we prove that for $\psipang$ this is not the case -- regardless of the parameterization of $\phi_B$, $\psipang$ always has at least $4$ nodal domains.

\begin{proposition}
\label{prop:pang-counter}
  $\psi'_{\text{pair}}$, as defined in \eqref{eq:pairwise} has $N^\uparrow ! + N^\downarrow!$,  nodal domains, where $N^{\uparrow \downarrow}$ are the numbers of spin up /down electrons. Since the ground state is proven to have only 2 for Li, Be (see \cite{bressanini_implications_2012}) this implies one Vandermonde determinant is insufficient to accurately represent the ground state.
\end{proposition}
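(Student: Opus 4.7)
The plan is to analyze the vanishing locus of $\psi'_{\text{pair}}$ directly from \eqref{eq:pairwise} and count sign regions via a permutation argument. First I would observe that, because $\phi_B$ carries a spin tag attached to each input coordinate (as in the trick used above for the sortlet), the product in \eqref{eq:pairwise} effectively antisymmetrizes only within each spin sector. Its nodal surface is thus the union over same-spin pairs $(i,j)$ of the coincidence loci $\{r : \phi_B(r_i,\{r\}) = \phi_B(r_j,\{r\})\}$. On the complement, the vector of $\phi_B$-values restricted to each spin sector is strictly ordered and so induces argsort permutations $\pi^{\uparrow}(r) \in S_{N^{\uparrow}}$ and $\pi^{\downarrow}(r) \in S_{N^{\downarrow}}$.

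Second, I would argue that these permutations are locally constant off the nodal surface (by continuity of $\phi_B$) and that their parities determine the sign of $\psi'_{\text{pair}}$ up to a strictly positive factor, namely the product of the sorted positive differences within each spin sector. Consequently, any continuous path along which $\pi^{\uparrow}$ or $\pi^{\downarrow}$ changes must cross the nodal surface, by the intermediate value theorem applied to whichever pairwise difference $\phi_B(r_i) - \phi_B(r_j)$ changes sign along the path. Therefore configurations realizing distinct permutation pairs lie in distinct connected components of the non-vanishing set, hence in distinct nodal domains.

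To obtain the count, I would fix a generic reference configuration $r_0$ in the non-vanishing set and exhibit, by continuously rearranging only the spin-up electron coordinates, $N^{\uparrow}!$ distinct argsort permutations $\pi^{\uparrow}$ of the up-sector (each separated from the others by a forced crossing of the nodal surface). Similarly, rearranging only spin-down coordinates realizes $N^{\downarrow}!$ distinct permutations $\pi^{\downarrow}$, contributing a total of $N^{\uparrow}! + N^{\downarrow}!$ distinct nodal domains by the preceding paragraph. Since this count is at least $3$ for both Li ($2+1$) and Be ($2+2$), it strictly exceeds the two nodal domains proven in \cite{bressanini_implications_2012} for the exact ground states of these atoms, giving the desired topological obstruction.

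The main obstacle is ensuring that each argsort permutation is actually realized by some configuration in the non-vanishing set --- i.e.\ that $\phi_B$ is sufficiently nondegenerate that no ordering of its values is vacuously missing. I would handle this by appealing to the flexibility of the neural-network parameterization of $\phi_B$: generically the map $r \mapsto (\phi_B(r_i, \{r\}))_{i}$ is a submersion onto an open set attaining every ordering of coordinates, so every permutation class is nonempty. A secondary subtlety is that each sign region could in principle split into multiple connected components, but this would only strengthen the lower bound and so poses no obstacle to the stated conclusion.
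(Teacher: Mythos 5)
Your argument is essentially the paper's: classify configurations off the zero set by the ordering of the same-spin $\phi_B$ values, note via the intermediate value theorem that any path changing an ordering must cross a pairwise zero of \eqref{eq:pairwise}, and realize distinct orderings by permuting same-spin electron positions. Two remarks on where your write-up drifts from the paper's proof. First, the "main obstacle" you identify (realizability of every argsort permutation) is not an obstacle and should not be patched by a genericity/submersion assumption on the network: since the second argument $\{r\}$ is permutation invariant, permuting the same-spin positions of a single reference configuration $r_0$ permutes the attached values, so every ordering in that sector is realized as soon as the values at $r_0$ are pairwise distinct (and if no such $r_0$ exists then $\psi'_{\text{pair}}\equiv 0$ and there is nothing to prove). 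This matters because the proposition is meant to hold for \emph{every} continuous $\phi_B$, not merely for generic parameterizations, so the appeal to flexibility of the network weakens the statement; the permutation-invariance observation is what the paper's proof implicitly uses. Second, your two families share the identity ordering, so the construction as described exhibits $N^{\uparrow}!+N^{\downarrow}!-1$ distinct classes rather than $N^{\uparrow}!+N^{\downarrow}!$, and in any case the argument yields only a lower bound on the number of nodal domains (each ordering class could a priori split further); the paper's own proof shares both of these imprecisions, and the lower bound is all that is needed for the topological obstruction in the Be-type case, so neither affects the intended conclusion.
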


\newcommand{\up}{\uparrow}
\newcommand{\dn}{\downarrow}
\begin{proof}
   Let $r \in \R^{3N}$ be an arbitrary configuration, and let $\pi \in S(N^{\up}) \cup S(N^\dn)$ be a permutation on either the spin up or spin down electrons.
   We claim then that $r$ and  $\pi r$ must always lie in separate nodal domains.
    To see this, let $(i,j)$ be any indices of two electrons with the same spin who are transposed by $\pi$. Note that the product comprising $\psipang$ contains a pairwise term $\left[\phi_B(r_i, \{r\}) - \phi_B(r_j, \{r\})\right]$.
    Without loss of generality,  asume $\phi_B(r_i, \{r\}) > \phi_B(r_j, \{r\})$. But for $\hat r = \pi r$ we must have 
   \begin{equation}
    \left[ \phi_B(r_i, \{r\}) - \phi_B(r_j, \{r\}) \right]= -  \left[ \phi_B(\hat r_i, \{\hat r\}) - \phi_B(\hat r_j, \{\hat r\}) \right]
   \end{equation}
   (because $\hat r_i = r_j$ and $\hat r_j = r_i$, and $\phi_B$ is otherwise invariant to $\pi r$).
   Applying the intermediate value theorem (since $\phi_B$ is assumed continuous), it then follows for any path 
   \begin{equation} 
    \gamma(t) : [0,1] \to \R^{3N} \qquad \gamma(0) = r \qquad \gamma(1) = \hat r
   \end{equation}
   connecting $r, \hat r$ must satisfy $\left[ \phi_B(\gamma(t')_i, \{\gamma(t')\}) - \phi_B(\gamma(t')_j, \{\gamma(t')\}) \right] = 0$ for some $t' \in (0,1)$. Again, since $\psipang = \prod_{i < j} \left[ \phi_B(r_i, \{r\}) - \phi_B(r_j, \{r\}) \right]$, a zero in any term $\phi_B(r_i, \{r\}) - \phi_B(r_j, \{r\})$ zeros $\Psi$. Since $\pi \in S(N^\up) \cup S(N^\dn)$ was arbitrary, this yields that there are exactly $N^\up ! + N^\dn !$ nodal domains.
   
\end{proof}

\vspace{-13pt}
\paragraph{Remark:}
Since $\pi$ was arbitrary, this also prohibits the sort of triple exchanges which in the exact ground state, connect same sign regions and collapse the number of nodal cells to 2 \cite{mitas2006fermion}. Additionally, although Beryllium is the largest system for which the two nodal domain conjecture is proven, it's frequently conjectured through the literature to be true universally, again see \cite{ceperley_fermion_1991, mitas2006fermion, bressanini_implications_2012}. We also remark it's known that $\pi^{*}$ is not constant across nodal domains as claimed in \cite{pang_on2_2022} -- instead, regions were $\pi^{*}$ is constant are referred to as permutation domains (equivalently cells), and it is known that each nodal domain contains multiple permutation cells (see Figure 3 in \cite{glauser_randomwalk_1992}).

\subsection{Advantages of the Sortlet Ansatz against the Vandermonde Ansatz}
To be clear, a single sortlet suffers from the same issue of too many nodal domains proven for the Vandermonde determinant in \Cref{prop:pang-counter}. Nonetheless, a key advantage of the Sortlet is that, due to the $O(N \log N)$ scaling of each term, we can use an expansion with a linear number of sortlets, $K=O(N)$, while retaining log-quadratic complexity $O( N^2 \log N)$. To achieve the same scaling, a Vandermonde ansatz, would be limited to a logarithmic number of terms. What does this mean practically? Increasing the number of terms is known to be crucial in classical QMC methods, where thousands or even millions of terms can be employed to refine the energy \cite{transcorrelated_qmc}. It is also known that for Hartree-Fock wavefunctions, increasing the number of terms is essential to reduce the number of nodal domains \cite{mitas2006fermion, fermion_nodes_cells_mitas, bressanini_implications_2012}.

Learned neural QMC ansatz such as \cite{pfau_ab-initio_2020, deepqmc} have mostly bucked this trend, opting to fix the number of terms to a relatively small constant ($K \approx 16$), but increase network size instead. While we are not able to match the flexibility of the determinant PsiFormer, which is likely universal with a constant number of determinants, as seen in \Cref{fig:B-ablation}, mitigating the nodal inaccuracy of our sortlet by increasing $K$ does seem to increase the speed of convergence.


\section{Related Work}
Neural Networks as representations for wavefunctions were applied to discrete quantum systems in \cite{carleo2017solving, choo2018symmetries, nagy2019variational, nomura2017restricted, luo_backflow_2019} and to continuous bosonic systems in \cite{saito2018method}. However, the first models considering fermionic continuous-space systems were DeepWF \cite{han2019solving}, Ferminet \cite{pfau_ab-initio_2020}, and Paulinet \cite{hermann2020deep}. These approaches were further developed with the introduction of the Psiformer model \cite{von_glehn_self-attention_2023-1} and works that pushed the framework forward by fine-tuning the techniques and devising new specialized architectures \cite{gerard2022gold, kim2023neural, lou2023neural, pescia2023message} and training procedures \cite{neklyudov2023wasserstein} that improved the speed and energies significantly. Other directions tried to construct architectures that are easy to sample \cite{thiede2023waveflow, xie2021ab} by leveraging normalizing flows, and meta-learning approaches where the wavefunction can be conditioned on the Hamiltonian to transfer to different systems \cite{gao2023generalizing, scherbela2022solving, scherbela2023towards, gao2021ab}. \\
Sorting as a method of guaranteeing antisymmetry was recently proposed in \cite{thiede2023waveflow}, but limited to the setting where electron positions are restricted to one-dimension.

\bibliography{QMC}

\begin{thebibliography}{43}
\providecommand{\natexlab}[1]{#1}
\providecommand{\url}[1]{\texttt{#1}}
\expandafter\ifx\csname urlstyle\endcsname\relax
  \providecommand{\doi}[1]{doi: #1}\else
  \providecommand{\doi}{doi: \begingroup \urlstyle{rm}\Url}\fi

\bibitem[Acevedo et~al.(2020)Acevedo, Curry, Joshi, Leroux, and
  Malaya]{acevedo_vandermonde_2020}
A.~Acevedo, M.~Curry, S.~H. Joshi, B.~Leroux, and N.~Malaya.
\newblock Vandermonde {Wave} {Function} {Ansatz} for {Improved} {Variational}
  {Monte} {Carlo}.
\newblock In \emph{2020 {IEEE}/{ACM} {Fourth} {Workshop} on {Deep} {Learning}
  on {Supercomputers} ({DLS})}, pages 40--47, Atlanta, GA, USA, Nov. 2020.
  IEEE.
\newblock ISBN 978-1-66542-245-1.
\newblock \doi{10.1109/DLS51937.2020.00010}.
\newblock URL \url{https://ieeexplore.ieee.org/document/9297114/}.

\bibitem[Al-Hamdani et~al.(24 June 2021)Al-Hamdani, Nagy, Zen, Barton,
  K{\'a}llay, Brandenburg, and Tkatchenko]{al-hamdani21}
Y.~S. Al-Hamdani, P.~R. Nagy, A.~Zen, D.~Barton, M.~K{\'a}llay, J.~G.
  Brandenburg, and A.~Tkatchenko.
\newblock Interactions between large molecules pose a puzzle for reference
  quantum mechanical methods.
\newblock \emph{Nature Communications}, 24 June 2021.
\newblock ISSN 2041-1723.
\newblock \doi{10.1038/s41467-021-24119-3}.

\bibitem[Ammar et~al.(2022)Ammar, Giner, and Scemama]{transcorrelated_qmc}
A.~Ammar, E.~Giner, and A.~Scemama.
\newblock Optimization of large determinant expansions in quantum monte carlo.
\newblock \emph{Journal of Chemical Theory and Computation}, 18\penalty0
  (9):\penalty0 5325--5336, 2022.
\newblock \doi{10.1021/acs.jctc.2c00556}.
\newblock URL \url{https://doi.org/10.1021/acs.jctc.2c00556}.
\newblock PMID: 35997484.

\bibitem[Becca and Sorella(2017)]{becca_sorella_2017}
F.~Becca and S.~Sorella.
\newblock \emph{Quantum Monte Carlo Approaches for Correlated Systems}.
\newblock Cambridge University Press, 2017.
\newblock \doi{10.1017/9781316417041}.

\bibitem[Bressanini(2012)]{bressanini_implications_2012}
D.~Bressanini.
\newblock Implications of the two nodal domains conjecture for ground state
  fermionic wave functions.
\newblock \emph{Physical Review B}, 86\penalty0 (11):\penalty0 115120, Sept.
  2012.
\newblock ISSN 1098-0121, 1550-235X.
\newblock \doi{10.1103/PhysRevB.86.115120}.
\newblock URL \url{https://link.aps.org/doi/10.1103/PhysRevB.86.115120}.

\bibitem[Carleo and Troyer(2017)]{carleo2017solving}
G.~Carleo and M.~Troyer.
\newblock Solving the quantum many-body problem with artificial neural
  networks.
\newblock \emph{Science}, 355\penalty0 (6325):\penalty0 602--606, 2017.

\bibitem[Ceperley(1991)]{ceperley_fermion_1991}
D.~M. Ceperley.
\newblock Fermion nodes.
\newblock \emph{Journal of Statistical Physics}, 63\penalty0 (5):\penalty0
  1237--1267, June 1991.
\newblock ISSN 1572-9613.
\newblock \doi{10.1007/BF01030009}.
\newblock URL \url{https://doi.org/10.1007/BF01030009}.

\bibitem[Choo et~al.(2018)Choo, Carleo, Regnault, and
  Neupert]{choo2018symmetries}
K.~Choo, G.~Carleo, N.~Regnault, and T.~Neupert.
\newblock Symmetries and many-body excitations with neural-network quantum
  states.
\newblock \emph{Physical review letters}, 121\penalty0 (16):\penalty0 167204,
  2018.

\bibitem[Dushman(1936)]{saulhl_1936}
S.~Dushman.
\newblock Elements of the quantum theory. x. the hydrogen molecule. part i.
  heitler-london theory.
\newblock \emph{Journal of Chemical Education}, 13\penalty0 (6):\penalty0 287,
  1936.
\newblock \doi{10.1021/ed013p287}.
\newblock URL \url{https://doi.org/10.1021/ed013p287}.

\bibitem[Feynman and Cohen(1956)]{feynman_energy_1956}
R.~P. Feynman and M.~Cohen.
\newblock Energy {Spectrum} of the {Excitations} in {Liquid} {Helium}.
\newblock \emph{Physical Review}, 102\penalty0 (5):\penalty0 1189--1204, June
  1956.
\newblock ISSN 0031-899X.
\newblock \doi{10.1103/PhysRev.102.1189}.
\newblock URL \url{https://link.aps.org/doi/10.1103/PhysRev.102.1189}.

\bibitem[Gao and G{\"u}nnemann(2021)]{gao2021ab}
N.~Gao and S.~G{\"u}nnemann.
\newblock Ab-initio potential energy surfaces by pairing gnns with neural wave
  functions.
\newblock \emph{arXiv preprint arXiv:2110.05064}, 2021.

\bibitem[Gao and G{\"u}nnemann(2023)]{gao2023generalizing}
N.~Gao and S.~G{\"u}nnemann.
\newblock Generalizing neural wave functions.
\newblock \emph{arXiv preprint arXiv:2302.04168}, 2023.

\bibitem[Gerard et~al.(2022{\natexlab{a}})Gerard, Scherbela, Marquetand, and
  Grohs]{gerard2022gold}
L.~Gerard, M.~Scherbela, P.~Marquetand, and P.~Grohs.
\newblock Gold-standard solutions to the schr{\"o}dinger equation using deep
  learning: How much physics do we need?
\newblock \emph{Advances in Neural Information Processing Systems},
  35:\penalty0 10282--10294, 2022{\natexlab{a}}.

\bibitem[Gerard et~al.(2022{\natexlab{b}})Gerard, Scherbela, Marquetand, and
  Grohs]{gerard2022goldstandard}
L.~Gerard, M.~Scherbela, P.~Marquetand, and P.~Grohs.
\newblock Gold-standard solutions to the schr\"odinger equation using deep
  learning: How much physics do we need?, 2022{\natexlab{b}}.

\bibitem[Glauser et~al.(1992)Glauser, Brown, Lester, Bressanini, Hammond, and
  Koszykowski]{glauser_randomwalk_1992}
W.~A. Glauser, W.~R. Brown, W.~A. Lester, D.~Bressanini, B.~L. Hammond, and
  M.~L. Koszykowski.
\newblock Random‐walk approach to mapping nodal regions of \textit{{N}}
  ‐body wave functions: {Ground}‐state {Hartree}–{Fock} wave functions
  for {Li}–{C}.
\newblock \emph{The Journal of Chemical Physics}, 97\penalty0 (12):\penalty0
  9200--9215, Dec. 1992.
\newblock ISSN 0021-9606, 1089-7690.
\newblock \doi{10.1063/1.463296}.
\newblock URL
  \url{https://pubs.aip.org/aip/jcp/article/97/12/9200-9215/794560}.

\bibitem[Glynn(1990)]{glynn1990reinforce}
P.~W. Glynn.
\newblock Likelihood ratio gradient estimation for stochastic systems.
\newblock \emph{Commun. ACM}, 33\penalty0 (10):\penalty0 75–84, oct 1990.
\newblock ISSN 0001-0782.
\newblock \doi{10.1145/84537.84552}.
\newblock URL \url{https://doi.org/10.1145/84537.84552}.

\bibitem[Griffiths and Schroeter(2018)]{griffiths_schroeter_2018}
D.~J. Griffiths and D.~F. Schroeter.
\newblock \emph{Introduction to Quantum Mechanics}.
\newblock Cambridge University Press, 3 edition, 2018.
\newblock \doi{10.1017/9781316995433}.

\bibitem[Han et~al.(2019)Han, Zhang, and Weinan]{han2019solving}
J.~Han, L.~Zhang, and E.~Weinan.
\newblock Solving many-electron schr{\"o}dinger equation using deep neural
  networks.
\newblock \emph{Journal of Computational Physics}, 399:\penalty0 108929, 2019.

\bibitem[Hermann et~al.(2020)Hermann, Sch{\"a}tzle, and
  No{\'e}]{hermann2020deep}
J.~Hermann, Z.~Sch{\"a}tzle, and F.~No{\'e}.
\newblock Deep-neural-network solution of the electronic schr{\"o}dinger
  equation.
\newblock \emph{Nature Chemistry}, 12\penalty0 (10):\penalty0 891--897, 2020.

\bibitem[Hutter(2020)]{hutter_representing_2020}
M.~Hutter.
\newblock On {Representing} ({Anti}){Symmetric} {Functions}, July 2020.
\newblock URL \url{http://arxiv.org/abs/2007.15298}.
\newblock arXiv:2007.15298 [quant-ph].

\bibitem[Kalos(1962)]{kalos62}
M.~H. Kalos.
\newblock Monte carlo calculations of the ground state of three- and four-body
  nuclei.
\newblock \emph{Phys. Rev.}, 128:\penalty0 1791--1795, Nov 1962.
\newblock \doi{10.1103/PhysRev.128.1791}.
\newblock URL \url{https://link.aps.org/doi/10.1103/PhysRev.128.1791}.

\bibitem[Kim et~al.(2023)Kim, Pescia, Fore, Nys, Carleo, Gandolfi,
  Hjorth-Jensen, and Lovato]{kim2023neural}
J.~Kim, G.~Pescia, B.~Fore, J.~Nys, G.~Carleo, S.~Gandolfi, M.~Hjorth-Jensen,
  and A.~Lovato.
\newblock Neural-network quantum states for ultra-cold fermi gases.
\newblock \emph{arXiv preprint arXiv:2305.08831}, 2023.

\bibitem[Lou et~al.(2023)Lou, Sutterud, Cassella, Foulkes, Knolle, Pfau, and
  Spencer]{lou2023neural}
W.~T. Lou, H.~Sutterud, G.~Cassella, W.~Foulkes, J.~Knolle, D.~Pfau, and J.~S.
  Spencer.
\newblock Neural wave functions for superfluids.
\newblock \emph{arXiv preprint arXiv:2305.06989}, 2023.

\bibitem[Luo and Clark(2019)]{luo_backflow_2019}
D.~Luo and B.~K. Clark.
\newblock Backflow {Transformations} via {Neural} {Networks} for {Quantum}
  {Many}-{Body} {Wave}-{Functions}.
\newblock \emph{Physical Review Letters}, 122\penalty0 (22):\penalty0 226401,
  June 2019.
\newblock ISSN 0031-9007, 1079-7114.
\newblock \doi{10.1103/PhysRevLett.122.226401}.
\newblock URL \url{http://arxiv.org/abs/1807.10770}.
\newblock arXiv:1807.10770 [cond-mat, physics:physics].

\bibitem[Martin et~al.(2016)Martin, Reining, and
  Ceperley]{martin_reining_ceperley_2016}
R.~M. Martin, L.~Reining, and D.~M. Ceperley.
\newblock \emph{Interacting Electrons: Theory and Computational Approaches}.
\newblock Cambridge University Press, 2016.
\newblock \doi{10.1017/CBO9781139050807}.

\bibitem[Mitas(2006{\natexlab{a}})]{fermion_nodes_cells_mitas}
L.~Mitas.
\newblock Structure of fermion nodes and nodal cells.
\newblock \emph{Phys. Rev. Lett.}, 96:\penalty0 240402, Jun 2006{\natexlab{a}}.
\newblock \doi{10.1103/PhysRevLett.96.240402}.
\newblock URL \url{https://link.aps.org/doi/10.1103/PhysRevLett.96.240402}.

\bibitem[Mitas(2006{\natexlab{b}})]{mitas2006fermion}
L.~Mitas.
\newblock Fermion nodes and nodal cells of noninteracting and interacting
  fermions, 2006{\natexlab{b}}.

\bibitem[Nagy and Savona(2019)]{nagy2019variational}
A.~Nagy and V.~Savona.
\newblock Variational quantum monte carlo method with a neural-network ansatz
  for open quantum systems.
\newblock \emph{Physical review letters}, 122\penalty0 (25):\penalty0 250501,
  2019.

\bibitem[Neklyudov et~al.(2023)Neklyudov, Nys, Thiede, Carrasquilla, Liu,
  Welling, and Makhzani]{neklyudov2023wasserstein}
K.~Neklyudov, J.~Nys, L.~Thiede, J.~Carrasquilla, Q.~Liu, M.~Welling, and
  A.~Makhzani.
\newblock Wasserstein quantum monte carlo: A novel approach for solving the
  quantum many-body schr$\backslash$" odinger equation.
\newblock \emph{arXiv preprint arXiv:2307.07050}, 2023.

\bibitem[Nomura et~al.(2017)Nomura, Darmawan, Yamaji, and
  Imada]{nomura2017restricted}
Y.~Nomura, A.~S. Darmawan, Y.~Yamaji, and M.~Imada.
\newblock Restricted boltzmann machine learning for solving strongly correlated
  quantum systems.
\newblock \emph{Physical Review B}, 96\penalty0 (20):\penalty0 205152, 2017.

\bibitem[Pang et~al.(2022)Pang, Yan, and Lin]{pang_on2_2022}
T.~Pang, S.~Yan, and M.~Lin.
\newblock \${O}({N}{\textasciicircum}2)\$ {Universal} {Antisymmetry} in
  {Fermionic} {Neural} {Networks}, June 2022.
\newblock URL \url{http://arxiv.org/abs/2205.13205}.
\newblock arXiv:2205.13205 [physics, physics:quant-ph].

\bibitem[Pescia et~al.(2023)Pescia, Nys, Kim, Lovato, and
  Carleo]{pescia2023message}
G.~Pescia, J.~Nys, J.~Kim, A.~Lovato, and G.~Carleo.
\newblock Message-passing neural quantum states for the homogeneous electron
  gas.
\newblock \emph{arXiv preprint arXiv:2305.07240}, 2023.

\bibitem[Peter T. A.~Galek and Lester(2006)]{qmc_small_mol}
N.~C.~H. Peter T. A.~Galek and W.~A. Lester.
\newblock Quantum monte carlo studies on small molecules.
\newblock \emph{Molecular Physics}, 104\penalty0 (19):\penalty0 3069--3085,
  2006.
\newblock \doi{10.1080/00268970600962071}.
\newblock URL \url{https://doi.org/10.1080/00268970600962071}.

\bibitem[Pfau et~al.(2020)Pfau, Spencer, Matthews, and
  Foulkes]{pfau_ab-initio_2020}
D.~Pfau, J.~S. Spencer, A.~G. d.~G. Matthews, and W.~M.~C. Foulkes.
\newblock Ab-{Initio} {Solution} of the {Many}-{Electron}
  {Schr}{\textbackslash}"odinger {Equation} with {Deep} {Neural} {Networks}.
\newblock \emph{Physical Review Research}, 2\penalty0 (3):\penalty0 033429,
  Sept. 2020.
\newblock ISSN 2643-1564.
\newblock \doi{10.1103/PhysRevResearch.2.033429}.
\newblock URL \url{http://arxiv.org/abs/1909.02487}.
\newblock arXiv:1909.02487 [physics].

\bibitem[Saito(2018)]{saito2018method}
H.~Saito.
\newblock Method to solve quantum few-body problems with artificial neural
  networks.
\newblock \emph{Journal of the Physical Society of Japan}, 87\penalty0
  (7):\penalty0 074002, 2018.

\bibitem[Scherbela et~al.(2022)Scherbela, Reisenhofer, Gerard, Marquetand, and
  Grohs]{scherbela2022solving}
M.~Scherbela, R.~Reisenhofer, L.~Gerard, P.~Marquetand, and P.~Grohs.
\newblock Solving the electronic schr{\"o}dinger equation for multiple nuclear
  geometries with weight-sharing deep neural networks.
\newblock \emph{Nature Computational Science}, 2\penalty0 (5):\penalty0
  331--341, 2022.

\bibitem[Scherbela et~al.(2023)Scherbela, Gerard, and
  Grohs]{scherbela2023towards}
M.~Scherbela, L.~Gerard, and P.~Grohs.
\newblock Towards a foundation model for neural network wavefunctions.
\newblock \emph{arXiv preprint arXiv:2303.09949}, 2023.

\bibitem[Schätzle et~al.(2023)Schätzle, Szabó, Mezera, Hermann, and
  Noé]{deepqmc}
Z.~Schätzle, P.~B. Szabó, M.~Mezera, J.~Hermann, and F.~Noé.
\newblock {DeepQMC: An open-source software suite for variational optimization
  of deep-learning molecular wave functions}.
\newblock \emph{The Journal of Chemical Physics}, 159\penalty0 (9):\penalty0
  094108, 09 2023.
\newblock ISSN 0021-9606.
\newblock \doi{10.1063/5.0157512}.
\newblock URL \url{https://doi.org/10.1063/5.0157512}.

\bibitem[Seth et~al.(2011)Seth, Ríos, and Needs]{rios_qmc}
P.~Seth, P.~L. Ríos, and R.~J. Needs.
\newblock {Quantum Monte Carlo study of the first-row atoms and ions}.
\newblock \emph{The Journal of Chemical Physics}, 134\penalty0 (8):\penalty0
  084105, 02 2011.
\newblock ISSN 0021-9606.
\newblock \doi{10.1063/1.3554625}.
\newblock URL \url{https://doi.org/10.1063/1.3554625}.

\bibitem[Thiede et~al.(2023)Thiede, Sun, and Aspuru-Guzik]{thiede2023waveflow}
L.~Thiede, C.~Sun, and A.~Aspuru-Guzik.
\newblock Waveflow: Enforcing boundary conditions in smooth normalizing flows
  with application to fermionic wave functions, 2023.

\bibitem[von Glehn et~al.(2023{\natexlab{a}})von Glehn, Spencer, and
  Pfau]{von_glehn_self-attention_2023}
I.~von Glehn, J.~S. Spencer, and D.~Pfau.
\newblock A {Self}-{Attention} {Ansatz} for {Ab}-initio {Quantum} {Chemistry},
  Apr. 2023{\natexlab{a}}.
\newblock URL \url{http://arxiv.org/abs/2211.13672}.
\newblock arXiv:2211.13672 [physics].

\bibitem[von Glehn et~al.(2023{\natexlab{b}})von Glehn, Spencer, and
  Pfau]{von_glehn_self-attention_2023-1}
I.~von Glehn, J.~S. Spencer, and D.~Pfau.
\newblock A {Self}-{Attention} {Ansatz} for {Ab}-initio {Quantum} {Chemistry},
  Apr. 2023{\natexlab{b}}.
\newblock URL \url{http://arxiv.org/abs/2211.13672}.
\newblock arXiv:2211.13672 [physics].

\bibitem[Xie et~al.(2021)Xie, Zhang, and Wang]{xie2021ab}
H.~Xie, L.~Zhang, and L.~Wang.
\newblock Ab-initio study of interacting fermions at finite temperature with
  neural canonical transformation.
\newblock \emph{arXiv preprint arXiv:2105.08644}, 2021.

\end{thebibliography}

\newpage
\appendix

\section{Assorted Proofs}
\subsection{Proof of the Variational Principle}

\begin{proof}
\label{proof:var-prin}
  Since multiplying $\Psi$ by any complex constant $c \in \C$ leaves the value of $R (\Psi)$ unchanged, we can, without loss of generality, assume that $\ang{\Psi \vert \Psi} = 1$. Taking the orthonormal basis $\Psi_n$ from the application of the spectral theorem to $\hat H$, we can expand 
  \[ \Psi = \sum_{n=0}^\infty \alpha_n \Psi_n \]
  but since $\ang{ \Psi_i \vert \Psi_j} = \delta_{ij}$ (1 if $i = j$, 0 otherwise), plugging this expansion into $R$ yields that 
  \begin{align*}
    R(\Psi) &= R \left( \sum_{n=0}^\infty \alpha_n \Psi_n \right)\\ 
    &= \ang{ \sum_{n=0}^\infty \alpha_n \Psi_n \left\vert \hat H \right\vert \sum_{m=0}^\infty \alpha_m \Psi_m }  \\ 
    &= \sum_{n=0}^\infty \sum_{m=0}^\infty \alpha_n \alpha_m^* \ang{  \Psi_n \left\vert \hat H \right\vert \Psi_m }\\ 
    &= \sum_{n=0}^\infty \sum_{m=0}^\infty \alpha_n \alpha_m^* E_m \underbrace{\ang{  \Psi_n  | \Psi_m }}_{\delta_{m n}} =\sum_{n=0}^\infty \alpha_n^2 E_n
  \end{align*}
  Since we assumed $\ang{\Psi | \Psi} = 1$, it must hold $\sum_n \alpha_n^2 = 1$. This yields that the minimum of $R$ is achieved precisely when 
  \[ \alpha_0 = 1, \alpha_n = 0 : \forall n \geq 1\] 
  which implies $\Psi = \Psi_0$.
\end{proof}

\subsection{Derivation of the energy gradient}

\label{apx:grad-est}

First, let's rewrite the expectation of $E_{\text{loc}}$ in bra-ket notation
\begin{equation}
  \mathbb{E}_{x \sim \Psi_\theta^2} [E_\text{loc}(x)]= {\ang{\Psi_\theta \vert E_\text{loc} \vert \Psi_\theta } \over \ang{\Psi_\theta \vert \Psi_\theta}}
\end{equation}
Differentiating by $\theta$ yields 
\begin{equation}
  {\partial \over \partial \theta} \left[ {\ang{\Psi_\theta \vert E_\text{loc} \vert \Psi_\theta } \over \ang{\Psi_\theta \vert \Psi_\theta}} \right] = { \partial\ang{\Psi_\theta \vert E_\text{loc} \vert \Psi_\theta } \over \partial \theta}{1 \over \ang{\Psi_\theta \vert \Psi_\theta}} + {\ang{\Psi_\theta \vert E_\text{loc} \vert \Psi_\theta }} {\partial \over \partial \theta}{1 \over \ang{\Psi_\theta  \vert \Psi_\theta }} \label{eq:eng-deriv1}
\end{equation}

For the derivative in the first term, since $\hat H$ is Hermitian, we have:
\begin{align}
  { \partial\ang{\Psi_\theta \vert E_\text{loc} \vert \Psi_\theta } \over \partial \theta} &=  { \partial\ang{\Psi_\theta \vert \hat H \vert \Psi_\theta } \over \partial \theta} \\
  &=  2 \ang{\Psi_\theta \vert \hat H \vert {\partial \Psi_\theta  \over \partial \theta}} = 2 \ang{\Psi_\theta \vert \hat H \vert {\partial \log \Psi_\theta  \over \partial \theta} \Psi_\theta}\\
  &= 2 \ang{\Psi_\theta \vert \hat E_\text{loc} \vert {\partial \log\Psi_\theta  \over \partial \theta} \Psi_\theta} 
\end{align}
As for the derivative in the second term, some manipulations yield: 
\begin{align}
  {\partial \over \partial \theta}{1 \over \ang{\Psi_\theta  \vert \Psi_\theta }} &= -{2 \over \ang{\Psi_\theta  \vert \Psi_\theta }^2} \ang{{\partial \Psi_\theta \over \partial \theta} | \Psi_\theta} \\ 
  &= -{2 \over \ang{\Psi_\theta  \vert \Psi_\theta }^2} \ang{{\partial \log \Psi_\theta \over \partial \theta} \Psi_\theta \vert \Psi_\theta} 
\end{align}
Substituting these expressions into \eqref{eq:eng-deriv1}, we find 
\begin{align*}
  {\partial \over \partial \theta} \left[ {\ang{\Psi_\theta \vert E_\text{loc} \vert \Psi_\theta } \over \ang{\Psi_\theta \vert \Psi_\theta}} \right] &= { \partial\ang{\Psi_\theta \vert E_\text{loc} \vert \Psi_\theta } \over \partial \theta}{1 \over \ang{\Psi_\theta \vert \Psi_\theta}} + {\ang{\Psi_\theta \vert E_\text{loc} \vert \Psi_\theta }} {\partial \over \partial \theta}{1 \over \ang{\Psi_\theta  \vert \Psi_\theta }} \\ 
  &=  \left[ 2 \ang{\Psi_\theta \vert \hat E_\text{loc} \vert {\partial \log\Psi_\theta  \over \partial \theta} \Psi_\theta}  \right] {1 \over \ang{\Psi_\theta \vert \Psi_\theta}}\\
  &- {\ang{\Psi_\theta \vert E_\text{loc} \vert \Psi_\theta }} \left[ {2 \over \ang{\Psi_\theta  \vert \Psi_\theta }^2} \ang{{\partial \log \Psi_\theta \over \partial \theta} \Psi_\theta \vert \Psi_\theta} \right]\\
  &=   2 { \ang{\Psi_\theta \vert \hat E_\text{loc} \vert {\partial \log\Psi_\theta  \over \partial \theta} \Psi_\theta}  \over \ang{\Psi_\theta \vert \Psi_\theta}}
  - 2 {\ang{\Psi_\theta \vert E_\text{loc} \vert \Psi_\theta }\over  \ang{\Psi_\theta  \vert \Psi_\theta }} {\ang{{\partial \log \Psi_\theta \over \partial \theta} \Psi_\theta \vert \Psi_\theta}\over\ang{\Psi_\theta  \vert \Psi_\theta }}
\end{align*}
At first, this might not appear like much of an improvement. Recognizing, however, that every term here can be written as an expectation against $\Psi_\theta^2$, makes clear its usefulness

\begin{equation}
  \boxed{
  \nabla E (\theta) = 2\ExPsi{x} \left[ E_\text{loc}(x) \nabla \log \Psi_\theta (x) - 2 \left( \ExPsi{y} \left[ E_\text{loc}(y) \right] \right)\nabla \log \Psi_\theta (x) \right]
  }\label{eq:eng-grad-ex}
\end{equation}

In practice, we will estimate \eqref{eq:eng-grad} using a finite collection of samples drawn from the normalized density $\frac{\Psi^2_\theta }{\ang{\Psi_\theta | \Psi_\theta}}$. 
\begin{equation}
    \nabla E (\theta) = 2 {1 \over n} \sum_{i=1}^n \left[ E_\text{loc}(x_i) \nabla \log \Psi_\theta (x_i) - 2 \left( {1 \over n} \sum_{i=1}^n \left[ E_\text{loc}(x_i)  \right] \right)\nabla \log \Psi_\theta (x_i) \right]
    \label{eq:eng-grad-finite}
\end{equation}

Again, this can thankfully be achieved through standard MHMC techniques without having to estimate the normalization constant. 

\subsection{Proof that the sortlet is once-differatiable}
\label{apx:sortlet-smooth}
\begin{proof}
  By assumption, $\alpha$ is itself assumed smooth ($C^\infty$), so since $\sigma$ is constant except for points in the set 
  
  \begin{equation}
    D = \left\{ r \in \R^{3N} : \exists i \in \{0 .. N\} \text{ such that } \alpha_i(r) = \alpha_{i+1}(r) \right\}
  \end{equation}
  $\Psi_\alpha$ is also smooth on $\R^{3N} \backslash D$. Now for $r \in D$, continuity of $\Psi_\alpha$ at $r$ is straightforward since $\prod_{i=1}^N \left(\alpha_{i+1}(r) - \alpha_i(r)\right) = 0$, which exactly cancels the discontinuity created by $\sigma(r)$. As for differentiability, we claim that the derivative is given by 
  \begin{equation}
    {\partial \Psi_\alpha \over \partial r^i} (x) = \left({\partial \alpha_{i+1} \over \partial r^i} - {\partial \alpha_i \over \partial r^i} \right) \prod_{j=1, j\neq i}^{N} \left(\alpha_{j+1}(r) - \alpha_j(r)\right) 
  \end{equation}
  where $k$ satisfies $\alpha_{k+1} - \alpha_k = 0$. By smoothness, we know this expression is correct on $\R^d \backslash D$, so the claim is that it extends, as one might expect, to $D$ as well. To see this, let's consider two cases:
  \begin{enumerate}
    \item \textbf{$k$ is non-unique.} Then it's easy to see that ${\partial \Psi_\alpha \over \partial r^i} = 0$, since the resulting sum from applying the product rule always has a zero in each term, canceling any discontinuity potentially resulting from $\sigma(\pi_\alpha)$. To put this more rigorously, the limit from any direction $\lim_{r' \to r} {\partial \Psi_\alpha \over \partial r^i}(r') = 0$.
    \item \textbf{$k$ is unique}. Handling the case where $k$ is unique requires a little more delicate manipulation. Consider that locally, for any $q \in B_\ep(r)$
    \begin{align}
      \psi_\alpha (q) &=  \sigma(\pi_\alpha) \left( \alpha_{k+1}(q) - \alpha_k(q) \right) \prod_{j=1, j\neq k}^{N} \left(\alpha_{j+1}(q) - \alpha_j(q) \right) \\
       & = \left( \hat \alpha_{k+1}(q) - \hat \alpha_k(q) \right) \prod_{j=1, j\neq k}^{N} \left(\alpha_{j+1}(q) - \alpha_j(q) \right)
    \end{align}
    where $\hat \alpha_k$ and $\hat \alpha_{k+1}$ are the same outputs of $\alpha$, evaluated locally, but ordered according to $\pi_\alpha(r)$. The exact $\ep > 0$ can be computed by taking a minimum over all $\epsilon_i$ for $i \neq k$, defined to each satisfy 
    \begin{equation}
      (\alpha_{i+1}(y) - \alpha_i(y)) > 0 : \forall y \in B_{\ep_i}(x)
    \end{equation}
      This means that locally, $\Psi_\alpha$ is smooth, since the sorted order of the other $\alpha_j$ does not change inside this neighbourhood. 
  \end{enumerate}

\end{proof}

\section{Experimental Details}
\label{apx:epx-details}
All results in \Cref{fig:B-ablation}, \Cref{fig:H4-geo} and \Cref{fig:first-row} were produced by modifying the FermiNet codebase to use our sortlet instead of the determinant. For the values in \Cref{fig:H4-geo} we optimized the wavefunction for 20,000 iterations, but for the others in \Cref{fig:first-row}, we ran each for 100,000 iterations. For the smaller systems, $Li, LiH, Be, H_4$ we used a batch size of 512. For those with more electrons, $B,C,N$ we used between 2048 and 4096. We changed the initialization envelope parameters pursuant to \cite{gerard2022goldstandard}, but we found that initializing to the suggested $Z/\text{row}$ value too unstable with our ansatz, so we used 2 instead for Boron and Carbon. All training runs were completed using 2 A6000 GPUs, and ran for approximately 12 hours on average. The energy values except for $N, CH_4$ in \Cref{fig:first-row} were computed by averaging over 10,000 separate estimates, with 500 mcmc equilibriation steps in between. For $N,CH4$, due to time constraints, the values were estimated using the last 5000 iterations of training. In both cases, the $3-\sigma$ confidence interval reflected by the parenthesis $(x)$ around a digit where estimated using the Gaussian formula from the central limit theorem as $3 \sigma = 3 {\hat \sigma \over \sqrt{n} }$. Molecules below $C$ in \Cref{fig:first-row} used $K=16$ determinants, but those above used $K=32$.

\end{document}